\documentclass{article}
\usepackage{amsmath,amscd,amsthm,amssymb}
\usepackage{verbatim}
\usepackage{thmtools}
\usepackage{thm-restate}

\newtheorem{theorem}{Theorem}

\declaretheorem[name=Lemma, numberlike=theorem]{Lemma}
\declaretheorem[name=Corollary, numberlike=theorem]{Corollary}

\declaretheorem[name=Definition]{Definition}

\usepackage{enumerate}
\usepackage{mathtools}
\usepackage{fullpage}
\usepackage{multirow}
\usepackage{graphicx}
\usepackage{epic}
\usepackage{eepic}
\usepackage{ifthen}
\usepackage{amsfonts}
\usepackage{wasysym}
\usepackage{color}
\usepackage{setspace}
\usepackage{booktabs}
\usepackage{diagbox}
\usepackage{thm-restate}
\usepackage{wrapfig}

\usepackage{times}

\usepackage{url}
\usepackage{hyperref}
\usepackage[capitalise]{cleveref}

\newcommand{\be}[1]{\begin{equation}\label{#1}}

\newcommand{\Pair}[2]

\usepackage{algorithm}
\usepackage{algorithmic}

\newcommand{\E}{\mathbb{E}}

\newcommand{\R}{\mathbb{R}}

\newcommand{\argmin}{\mathop{\text{argmin}}}

\usepackage{enumitem}
\usepackage{natbib}
\newcommand{\val}{\mathrm{value}}
\newcommand{\mean}{\mu}
\newcommand{\var}{\sigma^2}
\newcommand{\Ber}{\mathrm{Bern}}
\newcommand{\unif}{\mathrm{unif}}
\newcommand{\ones}{\mathrm{ones}}
\newcommand{\pair}{\mathrm{pair}}
\newcommand{\Sdet}{S_{\mathrm{det}}}
\newcommand{\Sunif}{S_{\unif}}
\newcommand{\Sbern}{S_{\Ber}}
\newcommand{\Stwo}{S_\mathrm{n=2}}
\newcommand{\Spair}{S_{\pair}}
\newcommand{\Speel}{S_{\mathrm{peel}}}

\newcommand{\zeros}[1]{\sharp_0(#1)}
\newcommand{\numones}[1]{\sharp_1(#1)}
\newcommand{\argmaxnonzero}[1]{i_{\max}(#1)}

\title{Adaptive Weighted Averaging}
\author{
Aditya Bhaskara \thanks{University of Utah. \texttt{bhaskaraaditya@gmail.com}}
\and
Ashok Cutkosky \thanks{Boston University. \texttt{ashok@cutkosky.com}}
\and
Ravi Kumar \thanks{Google. \texttt{ravi.k53@gmail.com}}
\and
Manish Purohit \thanks{Google. \texttt{purohitmanish89@gmail.com}}
}
\date{}

\begin{document}
\maketitle

\begin{abstract}
We study the problem of selecting the largest among $n$ unknown values $x_1,\dots,x_n$ given only a single unbiased estimate $y_i$ for each $x_i$. We design strategies that are simultaneously admissible (not uniformly dominated by any other strategy) and also never worse than a given baseline such as uniform random selection. We provide an application to stochastic optimization, where we obtain online-to-batch conversion bounds with a desirable ``no-compromise'' guarantee: they are \emph{never worse} than standard random iterate selection, and yet can be significantly better in benign settings.
\end{abstract}

\section{Introduction}

We consider a fundamental aggregation problem:
given $n$ unknown numbers $x = (x_1, \dots, x_n)$ and a single unbiased estimate $y_i$ for each, how well can one identify the index of the maximum $x_i$?
This problem and its solutions arise throughout learning theory, with links to analysis of empirical risk minimization \citep{pollard2012convergence, vapnik2013nature}, bandits \citep{bubeck2009pure, audibert2010best}, experts, and stochastic optimization \citep{cesa2004generalization, cesa2006prediction}. We work with a standard generalization of this problem: rather than picking a specific index, we choose a categorical distribution over the indices $w\in \Delta_n$. Then, an intuitive measure of the quality of such a $w$ is the inner product $\langle w, x\rangle$.

Although this problem is simple to state, there are surprising nuances to consider when designing a strategy.
One intuitive approach is the simple \emph{Empirical Risk Minimization} (ERM) rule that chooses the index $i$ with the highest value of $y_i$. This is optimal in the sense of being ``admissible'' (which we will define presently), but it is widely recognized that one usually obtains better results by employing some form of regularization to move away from ERM. 
Alternatively, consider the naive approach of simply choosing an index uniformly at random. On the surface, this method might seem absurd: it completely ignores the estimates! Nevertheless, it is surprisingly effective in some applications. For example, this is the method employed by the so-called ``online-to-batch conversion'' that yields minimax optimal algorithms in many stochastic optimization problems. 

\subsection{No-compromise adaptivity}
Given these two extremes, our focus is twofold: we attempt to build a method for choosing $w$ that is (i) \emph{never} worse than any arbitrary baseline ``prior'' over the choice of index (such as uniform sampling), and is (ii) also optimal in the sense that no other method can uniformly improve upon it. 

By doing so, we are able to provide techniques that go beyond minimax optimality in certain stochastic optimization problems. Specifically, we obtain estimates that are never worse than the standard minimax optimal estimate, but provably obtain better loss in favorable  settings. This goes beyond the standard ``adaptivity'' story in stochastic optimization. Typical adaptive methods may do better in a benign setting, but sacrifice some performance in the worst-case setting. For example, adaptive gradient descent methods like AdaGrad \citep{duchi2011adaptive} perform better when the gradients are small, but their worst case performance is worse than the minimax rate. In contrast, we do better in benign settings without sacrificing anything in the hard settings.

\subsection{Problem setup}
Let $x=(x_1,\dots,x_n)\in[0,1]^n$ be an unknown arbitrary vector and let $y=(y_1,\dots,y_n)\in[0,1]^n$ be an unbiased estimate of $x$; we assume that the $y_i$'s are independent.  We define a strategy $S$ as an algorithm that uses the estimate $y$ to output a distribution $S(y)$ over the indices $\{1, \ldots, n\}$.

The \emph{value} of a strategy $S$ is  $\mathbb{E}_{y \sim D}[\langle S(y), x \rangle]$.  A strategy $S$ \emph{dominates} $S'$ if its value is always at least that of $S'$ no matter the value of $x$ or the distribution of $y$. $S$ \emph{strictly dominates} $S'$ if, in addition, there is some setting of $x$ and distribution $y$ for which the value of $S$ is strictly larger than the value of $S'$. $S$ is said to be \emph{admissible} if it is not strictly dominated by any other strategy. We note that admissibility is a classic notion from Bayesian decision theory (e.g., see,~\citep{robert2007bayesian}).

We also assume that we are given some \emph{benchmark} strategy $\Sdet$, where $\Sdet$ is a ``constant'' strategy that does not make use of the observation $y$ at all. For example, one might consider the uniform benchmark $\Sunif(y) = (1/n,\dots,1/n)$. Our goal is to design a strategy that is simultaneously admissible and also dominates any given $\Sdet$.

\subsection{Contributions}
\textbf{The $\Sbern$ strategy.} 
We first introduce a strategy $\Sbern$, derived via the multilinear extension of a base strategy optimized for Bernoulli observations (\Cref{sec:uniform}).  (In that simpler case, the base strategy would  select uniformly at random from among indices where $y_i=1$.)  We prove that $\Sbern$ is admissible and strictly dominates the uniform strategy. Specifically, we show that the value of $\Sbern$ improves over the mean $\mu(x)$ by a term related to the variance of the means $\sigma^2(x)$:
\[
 \text{value}(\Sbern; D) \ge \mean(x) + \left(1 - \prod_{k=1}^n (1-x_k)\right) \cdot \frac{\var(x)}{\mean(x)}.
\]

\textbf{Arbitrary benchmarks.} Moving beyond uniform strategies, we next consider arbitrary fixed deterministic strategies.  For any such strategy $\Sdet$, we construct a new strategy ($\Speel$) that is both admissible and dominates $\Sdet$ (\Cref{sec:peeling}).

\textbf{Impossibility Results.}
We consider the harder setting where the observations $y_i$ are not independent and in particular study the sequential setting where $y_j$ (and the mean $x_j$) can depend on previous observations $y_i$ for $i < j$. In this case, we construct a simple counterexample (\Cref{seq:sequential}) that shows uniform sampling is in fact admissible and cannot be dominated by any strategy. Even when the observations are independent, we show that for any two fixed benchmark strategies, there does not exist a strategy that simultaneously dominates both of them (\Cref{sec:impossibility}).

\textbf{Applications.} We apply these results to stochastic optimization and establish a new online-to-batch conversion bound that uniformly improves upon the standard techniques by a term depending on amount of variation observed in the losses during the iteration of the online algorithm (\Cref{sec:otb}). This provides a rigorous way to take advantage of the intuition that one should not put much weight on the early iterations of an optimization algorithm without sacrificing the worst-case optimality of uniform averaging.
We also discuss other applications to ensemble methods and federated learning.

\section{Related work}

\paragraph{Optimal decision theory.}
Our setting can be precisely phrased in the language of Bayesian decision theory (see \citep{robert2007bayesian} for an excellent survey). In this language, our decision space is the $n$-dimensional simplex, and we seek to maximize the utility $\langle w, x\rangle$ (equivalently, minimize the loss $-\langle w, x\rangle$). Our goal is to design an estimator that is not only admissible, but also uniformly dominates any given benchmark strategy. This classical literature already suggests a natural approach for finding admissible strategies: start with any prior over decisions and choose a maximum a-posteriori strategy. This approach makes it fairly easy to see that the ERM rule is admissible. However, it is unclear how to design priors for which this approach actually dominates a given benchmark. In fact, it is surprisingly difficult to design a prior that does not simply yield the ERM strategy.

\paragraph{Prophet inequalities.}
Our setting has some conceptual similarity to the prophet-inequality literature, though the objectives differ.  Prophet inequalities~\citep{krengel1977semiamarts, hill1982comparisons} study online selection from a
sequence of independent random variables with the goal of approximating the expected maximum.  Tight constant-factor bounds are known for
i.i.d.\ and non-i.i.d.\ settings~\citep{correa2017unknown,lachish2021tight} and 
some recent works combine prior samples with online decisions~\citep{rubinstein2020beyond,immorlica2020prophet}.
Our problem, on the other hand, does not involve online stopping or maximizing selection reward.

\paragraph{Statistical estimation.}
Our setting also has a loose relationship to standard statistical
estimation problems, e.g., empirical Bayes and shrinkage
methods, which use information sharing across coordinates to improve
mean estimation or attempt to estimate rankings in high probability \citep{robbins1956empirical,efron2012large, bechhofer1954single}.
Classical results such as Stein's phenomenon
\citep{stein1956inadmissibility,james1961estimation} typically assume
Gaussian noise and seek to estimate the \emph{entire} vector of means
under squared-error loss, whereas in our setting the distributions are arbitrary, only a single sample is
observed from each, and the target is a scalar that depends on the mean
vector.  

\paragraph{Iterate averaging and online-to-batch conversions.}
Our setting is also closely connected to the concept of iterate averaging in stochastic optimization.
Formally, if $F:\R^n\to \R$ is some objective function, one might run some algorithm like stochastic gradient descent for $N$ iterations to obtain iterates $z_1,\dots,z_N$ in $\R^n$, and then set $\overline{z}$ to be a uniformly selected $z_i$. This immediately implies:
\begin{align*}
    \E[F(\overline{z}) - F(z^\star)] \underbrace{=}_{\text{definition of $\overline{z}$}} \frac{1}{N}\sum_{i=1}^N \E[F(z_i) - F(z^\star)]=\frac{\E[\text{regret}]}{N}.
\end{align*}
This is usually justified by observing that it yields minimax optimal algorithms in certain cases. However, the worst case examples for minimax optimality often somewhat degenerate. For example, the standard minimax adversary for non-smooth convex optimization forces $F(z_i)=F(z_j)$ for all $i,j$ \citep{bubeck2015convex}, so that there is no ``information'' available in the actual observed values of the $F(z_i)$. In practice, however, one expects some variation in the values. We would like an alternative weighted iterate averaging approach that can take advantage of such variation.

Prior literature frequently employs some limited forms of non-uniform averaging: non-adaptive polynomial weights are often used when the losses are known to have some additional smooth and convex structure \citep{joulani2020simpler, kavis2019unixgrad, lacoste2012simpler, cutkosky2019anytime}, while more advanced ``adaptive'' weighting schemes attempt to improve on uniform averaging more generally \citep{dekel2008online, levy2017online,defazio2023optimal}. However, these approaches are usually restricted to the causal regime, where the weight assigned to $y_t$ is a function solely of the historical sequence $\{y_i\}_{i=1}^{t-1}$ and they all under-perform uniform averaging in difficult settings in expectation. In contrast, our framework relaxes the causality constraint, allowing weights for $y_t$ to incorporate information from future iterates. Crucially, we also provide guarantees that our method is never outperformed by uniform averaging.

\section{Preliminaries}
Let $[n] = \{1, \ldots, n\}$.  Let $D_1, \ldots, D_n$ be distributions supported on $[0, 1]$.   We observe a single sample $y_i \sim D_i$, independently drawn, for each $i \in [n]$.  Let $x_i = \E[y_i]$; we assume that $x_i$ is not known to the algorithm.  Let $y = (y_1, \ldots, y_n)$, $x = (x_1, \ldots, x_n) = (\E[y_1], \ldots, \E[y_n]) =: \E[y]$, and $D = D_1 \times \cdots \times D_n$.  Let $\Ber(x)$ denote the Bernoulli distribution with mean $x$.

Let $\Delta_n$ denote the probability simplex in $\mathbb{R}^n$ and for a vector, its $i$th coordinate is accessed using the subscript $i$.  Let $\mean(x) = \frac{1}{n} \sum_{i = 1}^n x_i$ and let $\var(x) = \left( \frac{1}{n} \sum_{i = 1}^n x_i^2 \right) - \mean^2(x)$.   
\begin{Definition}[Strategy and value]
A \emph{strategy} is a function $S:[0,1]^n\to \Delta_n$.  The \emph{value} of a strategy $S$ for a product distribution $D$ is the expected value of the mean outcome, i.e., $\val(S; D) = \E_{y \sim D} [\langle S(y), \E[y]\rangle] = \E_{y \sim D} [\sum_{i = 1}^n x_i \cdot S(y)_i]$.
\end{Definition}
\begin{Definition}[Benchmark strategy]
A \emph{strategy} $S: [0,1]^n \to \Delta_n$ is called a \emph{benchmark} if it maps every outcome to a fixed probability distribution, i.e., $S(y) = p, \forall y \in [0,1]^n$ for some fixed $p \in \Delta_n$. For ease, we abuse notation and use $S = p$ to denote a benchmark strategy.
\end{Definition}
We define $\Sunif$ to be the benchmark strategy that places uniform mass on $[n]$ regardless of its input, i.e., $\Sunif(y) = (1/n, \ldots, 1/n)$.  Clearly, $\val(\Sunif; D) = \mu(x)$.
\begin{Definition}[Dominance and admissibility]
A strategy $S$ \emph{dominates} strategy $S'$, denoted $S \succeq S'$, iff $\val(S; D) \geq \val(S'; D)$, for all product distributions $D$. $S$ \emph{strictly dominates} $S'$, denoted $S \succ S'$, iff both $S\succeq S'$ and there is a product distribution $D$ such that $\val(S; D)> \val(S'; D)$.
A strategy $S$ is \emph{admissible} iff it is not strictly dominated by any other strategy $S'$, i.e., there is no $S'$ such that $S' \succ S$.
\end{Definition}

\subsection{Failure of simple approaches}\label{sec:challenge}

\begin{wrapfigure}{r}{0.45\textwidth}
\centering
\includegraphics[width=0.45\textwidth]{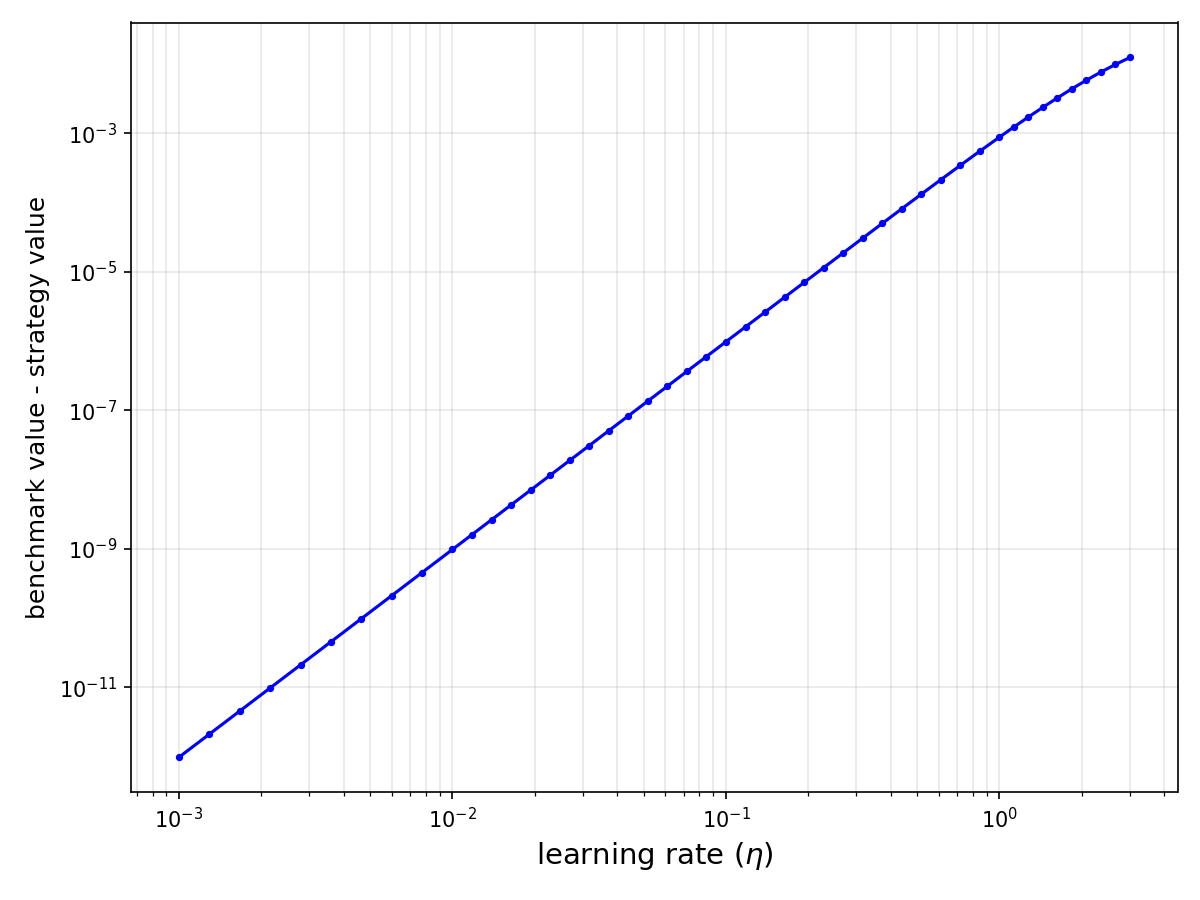}
\caption{\small Naive single-step exponential weights based strategies fail to dominate the benchmark.}\label{fig:EGisbad}
\end{wrapfigure}
Before describing our algorithms, we take a moment to appreciate how one intuitively reasonable approach might fall short---even if we are satisfied with simply dominating a benchmark without considering admissibility. The reader familiar with standard optimization and learning theory might imagine that taking a single step of exponential weights with an appropriately small learning rate $\eta$, initialized at a benchmark, would suffice. This strategy is essentially taking one step of SGD on the value function. However, this strategy fails to dominate the benchmark, even for very small learning rates $\eta$.  Indeed, we verify this numerically in Figure~\ref{fig:EGisbad}, where we consider the case $D = (\Ber(x_1), \Ber(x_2))$. For any learning rate $\eta$, we perform a grid search over all tuples $(x_1,x_2, p)$ where $p\in[0,1]$ specifies a benchmark strategy $S_{\text{det}}=(p,1-p)$. We plot the maximum over all such tuples of the benchmark value $x_1 p + (1-x_2)\cdot(1-p)$ minus the expected value obtained by taking one step of exponential weights with learning rate $\eta$ initialized at $(p,1-p)$, i.e., $S_{\text{exp}}=\left(\frac{p\exp(\eta y_1)}{p\exp(\eta y_1) +(1-p)\exp(\eta y_2)}, \frac{(1-p)\exp(\eta y_2)}{p\exp(\eta y_1) +(1-p)\exp(\eta y_2)},\right)$. Figure~\ref{fig:EGisbad} shows that, no matter the $\eta$ value, there is always an $(x_1,x_2,p)$ tuple for which $S_{\text{exp}}$ has smaller value than $S_{\text{det}}$. 
\section{Warm up: Two items}
\label{sec:warm-up}

We first consider a simple setup with just two items (i.e., $n=2$) and further let $D = D_1 \times D_2$ where $D_1 = \Ber(x_1)$ and $D_2 = \Ber(x_2)$.  Since $D$ is completely characterized by $x = (x_1, x_2)$, we define $\val(S; x) := \val(S; D)$ for convenience. In this setting, $\val(\Sunif; x) = \frac{x_1 + x_2}{2}$. Our goal is to design a strategy that is admissible and strictly dominates $\Sunif$.

Since each $y_i \in \{0, 1\}$, we explicitly construct such a strategy $\Stwo$ as follows:
$\Stwo((0, 0)) = \Stwo((1,1)) = (\frac{1}{2}, \frac{1}{2})$, $\Stwo((0, 1)) = (0, 1)$ and $\Stwo((1, 0)) = (1, 0)$. In other words, the strategy deterministically chooses the item with the higher outcome, but reverts to the uniform distribution when the outcomes are equal. We omit the proof that $\Stwo$ is admissible here since we prove a more general result in Section~\ref{sec:admiss} but we observe that $\Stwo \succeq \Sunif$ as follows.
\begin{align*}
\val(\Stwo; x) &= \mathbb{P}[y_1 = y_2] \cdot \left(\frac{x_1 + x_2}{2}\right) + \mathbb{P}[y = (0,1)] \cdot \left(x_2\right) + \mathbb{P}[y = (1,0)] \cdot \left(x_1\right) \\
&= \left(x_1 x_2 + (1-x_1)(1-x_2)\right)\left(\frac{x_1 + x_2}{2}\right) + (1-x_1)x_2(x_2) + x_1(1-x_2)(x_1) \\
&= \frac{x_1 + x_2}{2} + \frac{(x_1 - x_2)^2}{2} = \val(\Sunif; x) + \frac{(x_1 - x_2)^2}{2}.
\end{align*}

\subsection{Extension to $n > 2$}
When $D$ is a product of $n$ Bernoulli distributions, where $n > 2$, one can extend the $\Stwo$ strategy to dominate $\Sunif$. Let $\Spair$ be a  strategy that first chooses two indices $i < j$ uniformly at random and then uses the $\Stwo$ strategy to pick among items $i$ and $j$. Formally, for any $y$, we have $\Spair(y) = \dfrac{1}{\binom{n}{2}} \cdot \sum_{i < j} \Stwo((y_i, y_j))$. Once again, we observe that $\Spair \succeq \Sunif$ as follows.
\begin{align*}
\val(\Spair; x) &= \frac{1}{\binom{n}{2}}\cdot \sum_{i < j} \left(\frac{x_i + x_j}{2} + \frac{(x_i - x_j)^2}{2}\right)\\
&= \frac{\sum_{i=1}^n x_i}{n} + \frac{1}{\binom{n}{2}}\cdot \sum_{i < j} \frac{(x_i - x_j)^2}{2} = \val(\Sunif; x) + \frac{n}{n-1} \cdot \var(x).
\end{align*}
Thus, while $\Spair$ shows that improvement over $\Sunif$ is possible for product of Bernoulli distributions, it is unclear if something better exists. We therefore turn our attention to designing a strategy that is never worse than $\Sunif$, is admissible, and works for all product distributions. This leads us to a new general strategy, described in Section~\ref{sec:uniform}.

\section{An admissible strategy: $\Sbern$}
\label{sec:uniform}

We now consider the general setting when the distributions $D_i$ are arbitrary and hence we have $y_i \in [0, 1]$.  In this section, we design a new strategy $\Sbern$ that dominates $\Sunif$ and is also admissible.

It is easiest to describe the  strategy $\Sbern$ as a randomized process (although all strategies are deterministic by nature). Let $y = (y_1, \ldots, y_n)$ be a fixed outcome. First, for each item $i$, we sample a new independent Bernoulli random variable $z_i$ with mean $y_i$. Next, we choose an item uniformly at random from the items that have $z_i = 1$; if all $z_i = 0$, then we choose a uniformly random item. The strategy $\Sbern$ is defined as the expectation of this process.

Let $f : \{0,1\}^n \rightarrow \Delta_n$ be a function defined over vertices of the Boolean hypercube as $f(z) = (\frac{z_1}{\sum_j z_j}, \ldots, \frac{z_n}{\sum_j z_j})$ for $z \neq 0$ and $f(0,\ldots,0) = (\frac{1}{n}, \ldots, \frac{1}{n})$. Then formally, the strategy $\Sbern$ is defined as the unique multilinear extension of $f$ over the domain $[0,1]^n$. Mathematically, we can express $\Sbern(y) = (q_1(y), \ldots, q_n(y))$ where we have:
\begin{align}
    q_i(y) &= \frac{1}{n} \prod_{j=1}^n (1 - y_j) + \sum_{S : i \in S} \frac{1}{|S|} \cdot \prod_{j \in S} y_j \cdot \prod_{k \notin S} (1 - y_k), \label{eq:direct}
    \intertext{and, equivalently, using inclusion-exclusion\footnotemark, this can be expressed as:}
    &= \frac{1}{n} \prod_{j=1}^n (1-y_j) + \sum_{S : i \in S} \frac{(-1)^{|S|-1}}{|S|} \prod_{j \in S}y_j.
    \label{eq:indirect}
\end{align}
\footnotetext{Lemma~\ref{lem:equivalence} in the Appendix contains a full proof of the equivalence.}
It is easy to see that the strategy is valid, i.e., $\Sbern(y) \in \Delta_n$ for all $y \in [0,1]^n$.

\subsection{Value}

First, notice that since $q_i(y)$ is linear in each $y_j$ and all the $y$'s are independent, we have $\E[q_i(y)] = q_i(x)$. Hence, we have $\val(\Sbern; D) = \sum_{i=1}^n x_i \cdot \E[q_i(y)] = \sum_{i=1}^n x_i \cdot q_i(x)$. Next, we focus on the properties of $q_i(x)$.  
\begin{restatable}{Lemma}{lemprobtokernel}
\label{lem:prob_to_kernel}
    For any two indices $i$ and $j$, $q_i(x) - q_j(x) = (x_i - x_j) \cdot K_{i,j}(x)$ where $$K_{i,j}(x) = \int_{t=0}^1 \left(\prod_{k \notin \{i,j\}}(1-tx_k) \right) dt.$$
    Further, $K_{i,j}(x) \geq 0$, so $x_i > x_j \iff q_i(x) > q_j(x)$.
\end{restatable}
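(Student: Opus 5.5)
The idea is to write $q_i$ as an integral and then compare $q_i$ and $q_j$ by pulling out the dependence on $x_i$ and $x_j$. I would start from the inclusion–exclusion form \eqref{eq:indirect}, use the identity $\frac{1}{|S|} = \int_0^1 t^{|S|-1}\,dt$, and resum. Writing $S = \{i\}\cup T$ with $T\subseteq [n]\setminus\{i\}$, the sum becomes
\[
\sum_{T} \frac{(-1)^{|T|}}{|T|+1} x_i \prod_{k\in T} x_k \;=\; x_i\int_0^1 \sum_{T}\prod_{k\in T}(-t x_k)\,dt \;=\; x_i \int_0^1 \prod_{k\neq i}(1-tx_k)\,dt .
\]
This gives
\[
q_i(x) = \frac{1}{n}\prod_{k}(1-x_k) + x_i \int_0^1 \prod_{k\ne i}(1-tx_k)\,dt .
\]
The first term is symmetric in all coordinates, so it cancels in $q_i - q_j$.

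Set $P(t) = \prod_{k\notin\{i,j\}}(1-tx_k)$. Then
\[
q_i(x)-q_j(x) = \int_0^1 P(t)\big[x_i(1-tx_j) - x_j(1-tx_i)\big]\,dt .
\]
The bracket simplifies to $x_i - x_j$, since the $t x_i x_j$ terms cancel. This yields exactly $(x_i-x_j)K_{i,j}(x)$.

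For nonnegativity, each factor satisfies $1-tx_k\ge 0$ for $t,x_k\in[0,1]$, so $K_{i,j}(x)\ge 0$. In fact $K_{i,j}(x)>0$: the integrand is continuous and equals $1$ at $t=0$, so it is positive on a neighborhood of $0$.

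With strict positivity, the sign of $q_i(x)-q_j(x)$ equals the sign of $x_i-x_j$, which gives the equivalence $x_i>x_j \iff q_i(x)>q_j(x)$. Note that nonnegativity alone would not suffice for the reverse implication; it is the strict positivity of $K_{i,j}$ that makes the equivalence go through.

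The only step needing care is the resummation: the integral trick is what turns the alternating sum into a product, and one must track the sign $(-1)^{|S|-1} = (-1)^{|T|}$ correctly. Everything else is routine. I would rely on the equivalence of \eqref{eq:direct} and \eqref{eq:indirect} from Lemma~\ref{lem:equivalence} rather than reprove it.
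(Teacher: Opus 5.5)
Your proof is correct and uses the same ingredients as the paper's (the inclusion--exclusion form, $\frac{1}{k+1}=\int_0^1 t^k\,dt$, and binomial resummation). The only difference is the order of operations: you resum each $q_i$ into the closed form $\frac1n\prod_k(1-x_k)+x_i\int_0^1\prod_{k\neq i}(1-tx_k)\,dt$ before subtracting, whereas the paper first cancels the subsets containing both $i$ and $j$ and factors out $x_i-x_j$. Your strict-positivity remark is a real improvement over the paper's bare ``$K_{i,j}(x)\ge 0$'', but you have the direction backwards: $q_i(x)>q_j(x)\Rightarrow x_i>x_j$ already follows from nonnegativity, and it is $x_i>x_j\Rightarrow q_i(x)>q_j(x)$ that needs $K_{i,j}(x)>0$.
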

This immediately leads to a simple but loose lower bound on $K_{i,j}(\cdot)$.
\begin{Corollary}
\label{cor:kernal_lb_easy}
    For any pair $i, j \neq i$ of distinct indices and any vector $x$, we have $K_{i,j}(x) \geq \frac{1}{n-1}$.
\end{Corollary}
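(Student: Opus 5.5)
The plan is to bound the integrand of $K_{i,j}(x)$ from Lemma~\ref{lem:prob_to_kernel} pointwise and then integrate. Every coordinate satisfies $x_k \in [0,1]$, so for each $t\in[0,1]$ and each $k\notin\{i,j\}$ we have $0 \le 1 - t x_k$ and $1 - t x_k \ge 1 - t$. There are exactly $n-2$ indices outside $\{i,j\}$. Because all factors are nonnegative, the product is monotone in each factor, which gives
\[
\prod_{k \notin \{i,j\}} (1 - t x_k) \;\ge\; (1-t)^{n-2} \qquad \text{for all } t \in [0,1].
\]

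Integrating this inequality over $t\in[0,1]$ yields
\[
K_{i,j}(x) \;\ge\; \int_0^1 (1-t)^{n-2}\,dt \;=\; \frac{1}{n-1},
\]
which is the claimed bound. When $n=2$ the product is empty and equal to $1$, so $K_{i,j}=1=\frac{1}{n-1}$ and the bound holds with equality. This case is covered by the same computation.

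There is no real obstacle here. The only point needing care is the nonnegativity of each factor $1-tx_k$, which is what justifies multiplying the factorwise lower bounds; it holds because $t, x_k \in [0,1]$. Equality occurs when $x_k = 1$ for all $k \notin \{i,j\}$, so the bound is tight, which is consistent with the statement calling it "loose" only for typical $x$.
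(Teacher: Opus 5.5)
Your proposal is correct and follows the paper's proof exactly: bound each factor by $1-tx_k \ge 1-t$, take the product over the $n-2$ remaining indices, and integrate $(1-t)^{n-2}$ over $[0,1]$ to get $\frac{1}{n-1}$. Your remarks on the nonnegativity of the factors, the case $n=2$, and tightness when $x_k=1$ for all $k\notin\{i,j\}$ are correct additions the paper leaves implicit.
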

\begin{proof}
    For any $x_k$, we have $1-tx_k \geq 1-t$. So we have $K_{i,j}(x) \geq \int_{t=0}^1 (1-t)^{n-2} dt = \frac{1}{n-1}$.
\end{proof}
We now obtain a stronger lower bound.  Towards this, we define the multivariate auxiliary function:
\[
F(x) := \frac{1 - \prod_{k=1}^n (1-x_k)}{\sum_{k=1}^n x_k}.
\]
We first argue that $F(x)$ in non-increasing in each of its coordinates.
\begin{restatable}[Monotonicity]{Lemma}{lemmamono} \label{lemma:mono}
$F(x) = F(x_1, \dots, x_n)$ is a monotonically decreasing function of each of its variables $x_k \in [0,1]$.
\end{restatable}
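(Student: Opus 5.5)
Fix all coordinates except $x_k$ and write $P = \prod_{j\neq k}(1-x_j)\in[0,1]$ and $A = \sum_{j\neq k} x_j \ge 0$. Then
\[
F(x) = g(x_k) := \frac{1 - P(1-x_k)}{A + x_k} = \frac{(1-P) + P x_k}{A + x_k},
\]
a linear-fractional function of $x_k$. The plan is to show that $g$ is non-increasing on $[0,1]$, which is the sign of a single determinant.

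Differentiating,
\[
g'(x_k) = \frac{P(A+x_k) - \big((1-P)+Px_k\big)}{(A+x_k)^2} = \frac{PA - (1-P)}{(A+x_k)^2},
\]
and the numerator does not depend on $x_k$. It therefore suffices to prove $PA \le 1-P$, i.e.\ $P(1+A)\le 1$, i.e.
\[
\prod_{j\neq k}(1-x_j)\cdot\Big(1+\sum_{j\neq k}x_j\Big)\le 1 .
\]
This follows from $1-x_j\le e^{-x_j}$ and $1+A\le e^{A}$, since together they give $P(1+A)\le e^{-A}e^{A}=1$. Hence $g'\le 0$, and $F$ is non-increasing in $x_k$. (The lemma's "decreasing" should be read as non-increasing: if all other $x_j=0$, then $F\equiv 1$ in $x_k$.)

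The main obstacle is the degenerate point $A + x_k = 0$, where $F$ has the form $0/0$. I would handle it by taking the natural continuous extension $F(0)=1$, which is the limit along any ray. This value is consistent with monotonicity, because $F\le 1$ everywhere: the bound $1-\prod_k(1-x_k)\le \sum_k x_k$ is the union bound. Away from that point, the derivative computation above applies on all of $[0,1]$.
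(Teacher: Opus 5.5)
Your proof is correct and follows the paper's skeleton. The paper also differentiates in one coordinate, observes that the numerator $P_{n-1}(S_{n-1}+1)-1$ does not depend on that coordinate, and reduces the lemma to $(1+\sum_{j\neq k}x_j)\prod_{j\neq k}(1-x_j)\le 1$.

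The one substantive difference is how that inequality is proved. The paper uses induction on the number of variables, peeling off one factor $(1-x_{k+1})$ at a time. You get it in one line from $1-x\le e^{-x}$ and $1+A\le e^{A}$, which you may multiply because every factor is nonnegative.

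Your other additions are care the paper omits. These are handling the $0/0$ point by extending with $F=1$ (justified by the union bound $F\le 1$) and reading ``decreasing'' as non-increasing. The paper's derivative has $(S_{n-1}+x_n)^2$ in the denominator and never addresses where it vanishes. That case actually arises downstream: Lemma~\ref{lem:kernal_lb} evaluates $F$ with $x_i=x_j=0$, where the remaining sum can also be zero, and your extension is consistent with $K_{i,j}=1$ there.
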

We use this to show an improved lower bound on $K_{i,j}(\cdot)$.  
\begin{Lemma}
\label{lem:kernal_lb}
For any pair $i, j \neq i$ of distinct indices and any vector $x$, we have $K_{i,j}(x) \ge F(x).$
\end{Lemma}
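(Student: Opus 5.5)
We need $K_{i,j}(x)\ge F(x)$, where $K_{i,j}(x)=\int_0^1\prod_{k\notin\{i,j\}}(1-tx_k)\,dt$. The plan is to write $K_{i,j}$ itself as a value of $F$ at a modified vector and then apply the monotonicity of \Cref{lemma:mono}.

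\textbf{Step 1: rewrite $F$ as an integral.} Let $P(t)=\prod_{k=1}^n(1-tx_k)$. Then
\[
-\frac{d}{dt}\log P(t)=\sum_k \frac{x_k}{1-tx_k}\ge \sum_k x_k .
\]
This suggests working with $F$ directly. For a general vector $z\in[0,1]^m$, set $G(t)=\prod_k(1-tz_k)$. Then $1-\prod_k(1-z_k)=G(0)-G(1)=\int_0^1 -G'(t)\,dt$.

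\textbf{Step 2: choose the comparison vector.} Apply this with $z=x'$, where $x'$ sets $x_i=x_j=1$ and keeps the other coordinates of $x$. Then $\prod_k(1-x'_k)=0$, so $F(x')=1/(2+\sum_{k\ne i,j}x_k)$. That does not match $K_{i,j}$, so this choice of $x'$ alone is not enough.

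\textbf{Step 2, alternative: set the two coordinates to zero.} Take $x''$ equal to $x$ but with $x_i=x_j=0$. By \Cref{lemma:mono}, $F(x)\le F(x'')=\frac{1-Q(1)}{\sum_{k\ne i,j}x_k}$, where $Q(t)=\prod_{k\ne i,j}(1-tx_k)$. It then suffices to show
\[
\int_0^1 Q(t)\,dt\;\ge\;\frac{1-Q(1)}{s}, \qquad s=\sum_{k\ne i,j}x_k .
\]

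\textbf{Step 3: compare $Q$ with its derivative.} We have $1-Q(1)=\int_0^1 -Q'(t)\,dt$, and
\[
-Q'(t)=\sum_{k}x_k\prod_{l\ne k}(1-tx_l)\le s\,Q(t)\cdot\max_k\frac{1}{1-tx_k}.
\]
The extra factor $\max_k 1/(1-tx_k)$ goes the wrong way, so this crude bound fails. Instead, bound term by term: $x_k\prod_{l\ne k}(1-tx_l)$ is the derivative contribution of coordinate $k$. Write
\[
1-Q(1)=\sum_k x_k\int_0^1\prod_{l\ne k}(1-tx_l)\,dt=\sum_k x_k K^{(k)},
\]
where $K^{(k)}$ is the same integral with $k$ also removed.

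\textbf{Step 4: induction on $n$.} Removing one more factor makes the integrand larger, so $K^{(k)}\ge \int_0^1 Q$, which is again the wrong direction. So I will argue differently: for fixed total $s$, the quantity $\int_0^1 Q$ is a Schur-type expression. The cleanest route is to use the uniform identity
\[
1-Q(1)=\int_0^1 -Q'(t)\,dt
\]
together with integration by parts,
\[
\int_0^1 Q\,dt=Q(1)+\int_0^1 t\,(-Q'(t))\,dt .
\]
Then the target inequality becomes
\[
sQ(1)+\int_0^1 (st-1)(-Q'(t))\,dt\ge 0,
\]
and I would verify this using $Q(1)\ge\max(0,1-s)$ and the sign structure of $st-1$.

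\textbf{Main obstacle.} The hard step is Step 4: finding the right inequality that closes the gap. The most natural tool there is $\log$-concavity of $Q$ together with $Q(t)\ge 1-ts$.
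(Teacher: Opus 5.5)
There is a genuine gap, and it cannot be closed, because the statement itself is false.

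\textbf{Your reduced inequality is false.} Your second version of Step~2 is the same reduction the paper uses: zero out $x_i,x_j$ and invoke \Cref{lemma:mono}. But it leaves you needing $\int_0^1 Q(t)\,dt\ge (1-Q(1))/s$, and the reverse inequality holds for every $x$. Your own Step~3 decomposition shows this:
\begin{itemize}
\item Since $1-tx_k\le 1$, each $\prod_{l\ne k}(1-tx_l)\ge Q(t)$.
\item Hence $-Q'(t)\ge s\,Q(t)$ on $[0,1]$.
\item Integrating gives $1-Q(1)\ge s\int_0^1 Q(t)\,dt$, with strict inequality whenever $s>0$.
\end{itemize}
So the ``wrong direction'' you kept hitting in Steps~3 and~4 is not an artifact of crude bounds. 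No combination of $Q(1)\ge\max(0,1-s)$, $Q(t)\ge 1-ts$ and log-concavity can close Step~4. Already for a single factor $Q(t)=1-ct$, your integration-by-parts expression equals $c(1-c)+\int_0^1(ct-1)c\,dt=-c^2/2<0$.

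\textbf{The lemma itself fails.} Take $n=3$, $(i,j)=(1,2)$ and $x=(a,0,c)$. Then $K_{1,2}(x)=1-c/2$, while $F(x)=1-\frac{ac}{a+c}$, so $K_{1,2}(x)<F(x)$ whenever $0\le a<c$. For instance, $x=(0.1,0,0.5)$ gives $0.75<0.917$, and $x=(0,0,c)$ gives $1-c/2<1$.

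\textbf{The paper's proof has the same flaw.} It runs along your log-concavity idea. It uses $(1-y)^t\le 1-ty$, i.e.\ $Q(t)\ge Q(1)^t$, to get the correct bound $K_{i,j}(x)\ge (1-\bar P_{i,j})/(-\log\bar P_{i,j})$, where $\bar P_{i,j}=Q(1)$. It then plugs $-\log\bar P_{i,j}\ge s$ into the denominator. That substitution produces an \emph{upper} bound by $(1-\bar P_{i,j})/s$, not the claimed lower bound.

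Neither your route nor the paper's can establish the statement. What actually holds is $(1-\bar P_{i,j})/(-\log\bar P_{i,j})\le K_{i,j}(x)\le (1-\bar P_{i,j})/s$ (for $s>0$), together with \Cref{cor:kernal_lb_easy}. The second bound of \Cref{lem:lb} is derived from this lemma pair by pair, so it needs a separate justification.
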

{
\begin{proof}
To simplify notation, let $\bar{N}_{i,j} = [n] \setminus \{i, j\}$ and let $\bar{P}_{i,j} = \prod_{k \in \bar{N}_{i,j}} (1-x_k)$. We will first show that 
\[
K_{i,j}(x) \geq \frac{1 - \prod_{k \in \bar{N}_{i,j}} (1-x_k)}{\sum_{k \in \bar{N}_{i,j}} x_k} =
\frac{1 - \bar{P}_{i,j}}{\sum_{k \in \bar{N}_{i,j}} x_k}.
\]
Observe that for $t \in [0, 1]$, the function $(1-y)^t$ is concave
and hence it lies below its tangent, i.e., 
$(1-y)^t \leq 1-ty$.  Applying this, we obtain
\begin{equation}
\label{eq:Kij}
K_{i,j}(x) = \int_{t=0}^1 \left(\prod_{k \in \bar{N}_{i,j}}(1-tx_k) \right) dt
    \geq \int_{t=0}^1 \left(\prod_{k \in \bar{N}_{i,j}} (1-x_k) \right)^t dt
    = \int_{t=0}^1 \bar{P}_{i,j}^t dt = \frac{\bar{P}_{i,j}-1}{\log \bar{P}_{i,j}}.
\end{equation}
Next we claim $\log \bar{P}_{i,j} \leq -\sum_{k \in \bar{N}_{i,j}} x_k$.  Indeed, by concavity as before, we have for all $t \in [0,1]$, $\log(1-y) \leq \frac{\log(1-ty)}{t}$ and hence $\log(1-y) \leq \lim_{t \rightarrow 0} \frac{\log(1-ty)}{t} = -y$.  The claim follows by applying this inequality pointwise to each term in $\log \bar{P}_{i,j}$.  

Using this claim in \eqref{eq:Kij}, we obtain 
\begin{align*}
K_{i,j}(x) & \geq \frac{\bar{P}_{i,j}-1}{-\sum_{k \in \bar{N}_{i,j}} x_k} 
= \frac{1 - \prod_{k \in \bar{N}_{i,j}} (1-x_k)}{\sum_{k \in \bar{N}_{i,j}} x_k}  = F(x_1, \ldots, x_i=0, \ldots, x_j=0, \ldots, x_n) \\
& \geq F(x_1, \ldots, x_n),
\mbox{
using the monotonicity of $F$ (Lemma~\ref{lemma:mono}).}
\qedhere
\end{align*}
\end{proof}
}
Finally, we apply the lower bounds on $K_{i,j}(x)$ to bound the value of $\Sbern$.  
\begin{theorem}
\label{lem:lb}
We have the following two lower bounds:
\begin{itemize}[nosep]
    \item $\val(\Sbern; D) \geq \mean(x) + \frac{n}{n-1} \cdot \var(x),$
    \item $\val(\Sbern; D) \geq \mean(x) + \left(1 - \prod_{k=1}^n (1-x_k)\right) \cdot \frac{\var(x)}{\mean(x)}.$
\end{itemize}
\end{theorem}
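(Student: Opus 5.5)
The plan is to write the value as the uniform value plus a pairwise correction, and then plug in the two lower bounds on $K_{i,j}$. Since $\val(\Sbern; D) = \sum_i x_i q_i(x)$ and $\sum_i q_i(x) = 1$, we can write
\[
\val(\Sbern; D) - \mean(x) = \sum_{i} \left(x_i - \mean(x)\right) q_i(x) = \frac{1}{n}\sum_{i}\sum_{j} (x_i - x_j)\, q_i(x).
\]
The first equality uses that $\mean(x)$ is a constant and the weights sum to one. The second rewrites $x_i - \mean(x)$ as $\frac1n\sum_j (x_i-x_j)$.

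The next step symmetrizes the double sum by swapping $i$ and $j$ and averaging. This gives
\[
\val(\Sbern; D) - \mean(x) = \frac{1}{2n}\sum_{i,j}(x_i-x_j)\bigl(q_i(x)-q_j(x)\bigr) = \frac{1}{2n}\sum_{i\neq j}(x_i-x_j)^2 K_{i,j}(x),
\]
where the last step uses Lemma~\ref{lem:prob_to_kernel}. Each term is nonnegative, because $K_{i,j}\ge 0$.

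For the first bullet, apply Corollary~\ref{cor:kernal_lb_easy}, $K_{i,j}(x)\ge \frac1{n-1}$. Combine it with the standard identity $\sum_{i,j}(x_i-x_j)^2 = 2n^2\var(x)$. This gives $\frac{1}{2n}\cdot\frac{1}{n-1}\cdot 2n^2\var(x) = \frac{n}{n-1}\var(x)$, as claimed.

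For the second bullet, apply Lemma~\ref{lem:kernal_lb}, $K_{i,j}(x)\ge F(x)$, with the same identity. This gives a correction of at least
\[
n\,F(x)\,\var(x) = n\cdot\frac{1-\prod_k(1-x_k)}{n\,\mean(x)}\,\var(x) = \left(1-\prod_{k}(1-x_k)\right)\frac{\var(x)}{\mean(x)}.
\]
If $\mean(x)=0$, then $x=0$ and the bound is read as trivial, or as $0/0$ taken to be $0$.

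There is no serious obstacle, since the kernel lower bounds are already established. The only delicate point is the symmetrization step. In particular, we must check that the $i=j$ terms vanish and that the factor $\frac{1}{2n}$ is tracked correctly, so that the constants come out exactly.
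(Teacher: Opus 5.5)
Your proposal is correct and follows essentially the same route as the paper. Your symmetrization step just re-derives the identity $\sum_i a_i b_i = \frac{1}{n}\sum_i a_i\sum_i b_i + \frac{1}{n}\sum_{i<j}(a_i-a_j)(b_i-b_j)$ that the paper applies directly, before plugging in the two lower bounds on $K_{i,j}(x)$; your constants, including $\sum_{i,j}(x_i-x_j)^2 = 2n^2\var(x)$, check out.
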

{
\begin{proof}
As before, let $\Sbern(y) = (q_1(y), \ldots, q_n(y))$.  Recall that
\[
\val(\Sbern; D) = \E_y \left[\sum_i x_i \cdot q_i(y) \right] = \sum_i x_i \cdot \E[q_i(y)] = \sum_i x_i \cdot q_i(x).
\]
We begin with the following identity 
    \begin{align*}
        \sum_i a_i b_i &= \frac{1}{n} \sum_i a_i \sum_i b_i + \frac{1}{n}\sum_{i < j}(a_i - a_j)(b_i - b_j).
    \intertext{Substituting $a_i = q_i(x)$ and $b_i = x_i$,  using $\sum_i q_i(x) = 1$, and using Lemma \ref{lem:prob_to_kernel},}
         \sum_i x_i \cdot q_i(x) &= \frac{1}{n} \sum_i x_i + \frac{1}{n}\sum_{i < j}(x_i - x_j) (q_i(x) - q_j(x))
        = \mean(x) + \frac{1}{n}\sum_{i < j}(x_i - x_j)^2 K_{i,j}(x).
    \end{align*}
Now, lower bounding $K_{i,j}(x)$ using Corollary \ref{cor:kernal_lb_easy}, we obtain:
\[
\sum_i x_i \cdot q_i(x) \geq \mean(x) + \frac{1}{n (n-1)}\sum_{i < j}(x_i - x_j)^2 = \mean(x) + \frac{n}{n-1} \cdot \var(x).
\]
Instead, lower bounding $K_{i,j}(x)$ using Lemma \ref{lem:kernal_lb}, we obtain:
\[
\sum_i x_i \cdot q_i(x) \geq \mean(x) + \frac{1 - \prod_{k=1}^n (1 - x_k)}{n \sum_{k=1}^n x_k}\sum_{i<j}(x_i - x_j)^2\\
    = \mean(x) + \left(1 - \prod_{k=1}^n (1-x_k)\right) \cdot \frac{\var(x)}{\mean(x)}.
\qedhere
\]
\end{proof}
}
\Cref{lem:lb} shows that $\Sbern$ \emph{dominates} the $\Spair$ strategy we discussed in Section \ref{sec:warm-up}.
Note that the two bounds in \Cref{lem:lb} are incomparable as they depend on the sparsity of $x$.  This trade-off can be seen from
the coefficient multiplying $\sigma^2(x)$ in each bound. The first
bound has a coefficient of $\frac{n}{n-1} \approx 1$ while the second bound has a coefficient
of $(1 - \prod_{k=1}^n (1-x_k))/\mu(x) \geq 1$ (Lemma~\ref{lem:lb_compare}).

Indeed, consider the setting when $\ell$ of the $x_k$'s is $\approx 1$ and the others are $\approx 0$; hence, the product $\prod_k (1 - x_k) \approx 0$ and $\mean(x) \approx \ell/n$.   When $\ell = 1$, the second coefficient  $\approx n$, making the second bound better than the first.  When $\ell = n$, the second coefficient $\approx 1$, making it slightly weaker than the first bound's coefficient of $\frac{n}{n-1}$. Thus, while the second bound can be asymptotically better (by a factor of $n$), it is never worse than the first bound by more than a small  factor of $\approx n/(n-1)$.

\subsection{Admissibility}
\label{sec:admiss}

In this section, we show that $\Sbern$ is admissible.  To this end, we define the notion of symmetric strategies.  Let $\mathbb{S}_n$ be the set of all permutations on $[n]$.  A strategy $S$ is \emph{symmetric} if $S(\pi (y)) = \pi (S(y))$ for any permutation $\pi \in \mathbb{S}_n$; here for a vector $y$,  $\pi (y)$ is the vector $y$ with elements permuted according to $\pi$.  By construction, $\Sbern$ is a symmetric strategy.  Note that for a symmetric strategy $S$ and for any $\pi \in \mathbb{S}_n$, we then have that 
$\val(S; D) = \val(S; D \circ \pi)$, where $D \circ \pi$ denotes the product distribution given by coordinates of $D$, permuted by $\pi$. 

\begin{Lemma}
\label{lem:symm}
    If $S \succeq \Sbern$, then there is a symmetric strategy $S' \succeq \Sbern$. 
\end{Lemma}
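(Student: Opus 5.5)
The plan is to symmetrize $S$ by averaging over all permutations. Given $S \succeq \Sbern$, define
\[
S'(y) \;=\; \frac{1}{n!}\sum_{\pi \in \mathbb{S}_n} \pi^{-1}\bigl(S(\pi(y))\bigr).
\]
Since $\Delta_n$ is convex and closed under coordinate permutation, $S'(y) \in \Delta_n$, so $S'$ is a valid strategy.

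\textbf{Step 1: $S'$ is symmetric.} For any $\sigma \in \mathbb{S}_n$ we have
\[
S'(\sigma(y)) = \frac{1}{n!}\sum_\pi \pi^{-1}\bigl(S(\pi\sigma(y))\bigr).
\]
Reindex by $\rho = \pi\sigma$, so that $\pi^{-1} = \sigma\rho^{-1}$. This gives $S'(\sigma(y)) = \sigma\bigl(\frac{1}{n!}\sum_\rho \rho^{-1}(S(\rho(y)))\bigr) = \sigma(S'(y))$. The only care needed is to fix one convention for how $\pi$ acts on vectors, so that the composition order comes out consistently.

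\textbf{Step 2: value of $S'$.} Fix a product distribution $D$ with mean $x$. Value is linear in the strategy, so
\[
\val(S';D) = \frac{1}{n!}\sum_\pi \E_{y\sim D}\bigl[\langle \pi^{-1}(S(\pi(y))), x\rangle\bigr].
\]
Because permutations are orthogonal, $\langle \pi^{-1}(v), x\rangle = \langle v, \pi(x)\rangle$. If $y \sim D$, then $\pi(y)$ is distributed as the permuted product distribution $D\circ\pi$, which has mean $\pi(x)$ and is again a product distribution. Each term is therefore exactly $\val(S; D\circ\pi)$, and
\[
\val(S';D) = \frac{1}{n!}\sum_\pi \val(S;D\circ\pi).
\]

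\textbf{Step 3: dominance.} By hypothesis, $\val(S;D\circ\pi) \ge \val(\Sbern;D\circ\pi)$ for every $\pi$. Since $\Sbern$ is symmetric, the remark preceding the lemma gives $\val(\Sbern;D\circ\pi) = \val(\Sbern;D)$. Averaging over $\pi$ yields $\val(S';D) \ge \val(\Sbern;D)$ for all product $D$, so $S' \succeq \Sbern$.

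The only real obstacle is bookkeeping in Steps 1 and 2: checking that the permuted distribution matches the convention $D\circ\pi$ (or its inverse). This does not affect the conclusion, because we average over the whole group $\mathbb{S}_n$ and either convention gives the same average.

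For later use we also record that strictness is preserved. If $\val(S;D) > \val(\Sbern;D)$ for some $D$, then the term $\pi = \mathrm{id}$ in the average is strictly larger, while every other term is at least as large. So $S' \succ \Sbern$ whenever $S \succ \Sbern$.
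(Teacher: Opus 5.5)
Your proposal is correct and is essentially the paper's proof. You use the same permutation-averaged $S'(y)=\frac{1}{n!}\sum_{\pi}\pi^{-1}(S(\pi(y)))$ and the same chain $\val(S';D)=\frac{1}{n!}\sum_{\pi}\val(S;D\circ\pi)\ge\frac{1}{n!}\sum_{\pi}\val(\Sbern;D\circ\pi)=\val(\Sbern;D)$, which relies on the symmetry of $\Sbern$. Your explicit checks that $S'$ is symmetric and that strict dominance is preserved (via the $\pi=\mathrm{id}$ term) are left implicit in the paper, and the strictness observation is what makes the later ``we may assume $S$ is symmetric'' reduction actually yield admissibility.
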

\begin{proof}
Let $S'$ be a symmetrized version of $S$, i.e, 
$S'(y) = (1/n!) \sum_{\pi \in \mathbb{S}_n} \pi^{-1}\left(S(\pi(y))\right)$.  
For any product distribution $D$, we have
\begin{align*}
\val(S'; D) 
&  
= \frac{1}{n!} \sum_{\pi \in \mathbb{S}_n} \val(S; D \circ \pi) 
\geq \frac{1}{n!} \sum_{\pi \in \mathbb{S}_n} \val(\Sbern; D \circ \pi) = \val(\Sbern; D),
\end{align*}
where the inequality follows since
$S \succeq \Sbern$ and the last step used the symmetry of $\Sbern$.
\end{proof}

\begin{Lemma}
\label{lem:base}
    If $S \succeq \Sbern$, then $S(y) = \Sbern(y)$ for all binary outcomes $y \in \{0, 1\}^n$.
\end{Lemma}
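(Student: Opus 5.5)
The plan is to test dominance only on very simple product distributions: point masses, and products of Bernoullis where at most one coordinate is random. Such a distribution puts mass on at most two vertices of the cube. We then argue by induction on the number of ones $|T|$ in the binary outcome $v=\mathbf{1}_T$. Throughout we use that for binary $v$, $\Sbern(v)$ is uniform on $T$ when $T\neq\emptyset$, and uniform on $[n]$ when $v=0$. This is immediate from the definition of $f$, since $\Sbern$ extends $f$.

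\emph{Step 0: the outcome $y=0$.} Fix $a\in[n]$, $s\in(0,1)$, and let $D_a=\Ber(s)$ with all other $D_k$ the point mass at $0$. Then $x=s e_a$, and $y$ equals $0$ with probability $1-s$ and $e_a$ with probability $s$. Hence
\[
\val(S;D)=s\bigl[(1-s)S(0)_a+s\,S(e_a)_a\bigr]\le s\bigl[(1-s)S(0)_a+s\bigr],
\]
while $\val(\Sbern;D)=s[(1-s)/n+s]$. Dominance gives $(1-s)S(0)_a\ge(1-s)/n$, i.e.\ $S(0)_a\ge 1/n$ for every $a$. Since the coordinates of $S(0)$ sum to $1$, we get $S(0)=(1/n,\dots,1/n)=\Sbern(0)$.

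\emph{Step 1: support on $T$.} For binary $v\neq 0$ with support $T$, take $D$ to be the point mass at $v$, so $x=v$. Then $\val(\Sbern;D)=\langle\Sbern(v),v\rangle=1$, so dominance forces $\langle S(v),v\rangle=1$. Thus $S(v)$ is supported on $T$.

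\emph{Step 2: induction on $|T|\ge 1$.} Assume $S(u)=\Sbern(u)$ for all binary $u$ with fewer ones than $v$; the base case is Step 0. Fix $a\in T$ and let $D_a=\Ber(1-s)$, with every other coordinate the point mass at $v_k$. Then $y=v$ with probability $1-s$ and $y=v-e_a$ with probability $s$, and $x=v-se_a$.

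By induction, $S(v-e_a)=\Sbern(v-e_a)$, which is uniform on $T\setminus\{a\}$ (or equals $\Sbern(0)$ if $T=\{a\}$). In either case $S$ and $\Sbern$ contribute the same amount on this outcome. Using Step 1, on the outcome $v$ we have
\[
\langle S(v),x\rangle=\sum_{i\in T\setminus\{a\}}S(v)_i+(1-s)S(v)_a=1-sS(v)_a,
\]
and the analogous expression for $\Sbern$ is $1-s/|T|$. Dominance then yields $(1-s)(1-sS(v)_a)\ge(1-s)(1-s/|T|)$, i.e.\ $S(v)_a\le 1/|T|$ for every $a\in T$. Since $S(v)$ is a probability vector supported on $T$, all these inequalities are tight, so $S(v)$ is uniform on $T$, which is $\Sbern(v)$.

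The only delicate point is choosing test distributions that isolate a single coordinate $S(v)_a$. Perturbing exactly one coordinate of $v$ towards $0$ does this, because the neighbouring vertex has already been handled by induction. Once this choice is made, the rest is bookkeeping; no symmetry assumption on $S$, and hence no appeal to \cref{lem:symm}, is needed.
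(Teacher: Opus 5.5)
Your proof is correct, and it argues differently from the paper. The paper first invokes Lemma~\ref{lem:symm} to assume $S$ is symmetric. Then $S(0)$ is uniform by symmetry alone. For binary $y\neq 0$, the paper combines the point-mass argument (your Step 1) with invariance of $S(y)$ under permutations fixing $y$ to get uniformity on the ones.

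You never use symmetry. A single Bernoulli coordinate perturbing $0$ gives $S(0)_a\ge 1/n$. A single coordinate pulling $v$ toward the already-settled vertex $v-e_a$ gives $S(v)_a\le 1/|T|$. The simplex constraint then turns these one-sided bounds into equalities.

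The paper's route is shorter, but its reduction proves less than the statement claims. Lemma~\ref{lem:symm} only produces a symmetrized $S'\succeq\Sbern$, so the symmetric argument establishes $S'=\Sbern$ on binary points, not $S=\Sbern$. At $y=0$ it carries no information about $S(0)$ at all, since $S'(0)$ is uniform for every $S$. That reduction is enough for admissibility, because strict dominance survives symmetrization. It is not enough for the lemma as stated for arbitrary $S$.

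Your argument proves exactly the stated claim. Your Step~0 is essentially the test distribution the paper itself uses at $y=0$ in Lemma~\ref{lem:base_arbitrary}, where symmetry is unavailable.
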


\begin{proof}
By Lemma~\ref{lem:symm}, we can assume that $S$ is symmetric. For $y = 0$, any symmetric strategy must have $S(y) = \Sbern(y) = (1/n, \ldots 1/n)$. Now for any other binary outcome $y \in \{0, 1\}^n$, consider the distribution $D_y$ that is point mass at $y$ in the binary hypercube; hence, $\val(\Sbern; D_y) = 1$.  Given $S \succeq \Sbern$, it can only depend on indices $i$ such that $y_i = 1$. But since $S$ is symmetric, it must be uniform over the ones in $y$, i.e., identical to the $\ones$ strategy for binary outcomes, which by construction coincides with $\Sbern$ for binary outcomes.
\end{proof}
The proof proceeds by contradiction, assuming there is a strategy $S$ that strictly dominates $\Sbern$.  For an outcome $y$, let $\zeros{y} = |\{i : y_i = 0\}|$  (resp., $\numones{y} = |\{i : y_i = 1\}|$) denote the number of zeroes (resp., ones) in $y$.  Consider the ordering $\gtrdot$ of the $y$'s induced by the natural lexicographic ordering of $(\zeros{y}, \numones{y})$.  The main idea is a backward induction on this ordering of the $y$'s; the base of the induction is  Lemma~\ref{lem:base}.  For the inductive step, we select an outcome $y^*$ that is highest in the ordering among those where $S$ and $\Sbern$ differ.   We then construct a specific product distribution $D$ that exploits the difference at outcome $y^*$. If $S$ places less weight on an item $j$ compared to $\Sbern$ where $y^*_j=1$, we assign that item a high mean in $D$, making $\Sbern$ better than $S$ on $D$. Conversely, if $S$ places less weight on an item $j$ where $y^*_j=0$, we treat $j$ as a ``rare but high-value'' item, assigning it a small probability $p$ of being $1$ but a high  value, in $D$. By carefully choosing parameters, we can once again ensure that the gain from $\Sbern$ correctly weighting item $j$ in the rare event $y_j=1$ is more than the loss that can arise in the common event $y_j=0$.  Both contradict that 
$S$ dominates $\Sbern$.
\begin{theorem}
    If $S \succeq \Sbern$, then $S = \Sbern$.
\end{theorem}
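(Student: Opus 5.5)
We argue by contradiction, following the outline above. Suppose $S \succeq \Sbern$ but $S \neq \Sbern$. By Lemma~\ref{lem:symm} we may also assume $S$ is symmetric; we will derive a contradiction for this symmetrized strategy, and at the end note that $S$ itself must then agree with $\Sbern$. For the symmetric strategy $S'$ of Lemma~\ref{lem:symm}, this last step uses that the inequality $\val(S;D\circ\pi)\ge\val(\Sbern;D\circ\pi)$ must be an equality for every $\pi$. Then $\val(S;D)=\val(\Sbern;D)$ for all $D$, and running the same pointwise argument directly on $S$ gives $S=\Sbern$. Lemma~\ref{lem:base} gives agreement on $\{0,1\}^n$.

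Among outcomes where $S(y)\neq \Sbern(y)$, pick $y^*$ maximal in the order $\gtrdot$, so that $S$ and $\Sbern$ agree on every outcome with more zeros, or with equally many zeros and more ones. Since both $S(y^*)$ and $\Sbern(y^*)$ lie in $\Delta_n$, there is an index $j$ with $S(y^*)_j < \Sbern(y^*)_j$.

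We build a product distribution $D$ supported on few outcomes, chosen so that every outcome in its support other than $y^*$ is ranked above $y^*$. Coordinates with $y^*_k\in\{0,1\}$ are kept deterministic. Each fractional coordinate $y^*_k\in(0,1)$ is made two-point, on $\{y^*_k, 1\}$ or $\{0,y^*_k\}$, with its mean as the free parameter. The aim is that $\val(S;D)-\val(\Sbern;D)$ collapses to $\P[y^*]$ times $\langle S(y^*)-\Sbern(y^*), x\rangle$, plus terms from outcomes where the strategies agree. Those agreement terms cancel exactly, so it suffices to make $\langle S(y^*)-\Sbern(y^*),x\rangle<0$. This inner product has zero-sum weights, so we need $x_j$ large relative to the other means.

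\textbf{Case $y^*_j>0$.} Take $D_j$ to be a point mass at $y^*_j$. We need $x_j$ to dominate, so we shrink the other coordinates' means by placing mass on $0$. Moving a coordinate toward $0$ increases $\zeros{\cdot}$, so those support points are ranked above $y^*$, where the strategies agree. The limit of the difference is then essentially $x_j\,(S(y^*)_j-\Sbern(y^*)_j)<0$. One caveat: coordinates equal to $y^*$ must keep positive mass so that $y^*$ itself has positive probability. We therefore use two-point laws on $\{0,y^*_k\}$ with a small mean, and compare orders of magnitude via the multilinear structure.

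\textbf{Case $y^*_j=0$.} This is the ``rare but high-value'' item. Let $y_j\in\{0,1\}$ with $\P[y_j=1]=p$, and let the other coordinates be fixed at $y^*$. Then the outcome with $y_j=1$ has one fewer zero and one more one, which ranks it above $y^*$, so the strategies agree there. The difference is $(1-p)\langle S(y^*)-\Sbern(y^*),x\rangle$ with $x_j=p$. But here $x_j=p$ is small, so the sign is not automatic. We would instead let the other coordinates be two-point laws taking the value $0$ with high probability, so that their means are tiny compared with $p$. The role of monotonicity in $\gtrdot$ is then to guarantee that every non-$y^*$ outcome in the support is ranked higher.

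The main obstacle is checking the ordering claim uniformly for mixed supports. An outcome can gain a zero in one coordinate while losing a one in another, and the comparison must be made lexicographically with $\zeros{\cdot}$ first. If direct control fails, I would use a polynomial-in-$p$ argument. The value difference is a polynomial in the means that is nonpositive everywhere; its lowest-order term in $p$ isolates $y^*$, and a nonzero lowest-order coefficient of the wrong sign contradicts $S\succeq\Sbern$.
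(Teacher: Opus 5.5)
Your overall plan is the paper's: backward induction on $\gtrdot$, then a split on whether $y^*_j>0$. Your case $y^*_j>0$ matches the paper's Case~1, provided every $k\neq j$ with $y^*_k>0$ is randomized as $y^*_k\cdot\Ber(\epsilon)$, including those with $y^*_k=1$; keeping $y^*_k=1$ deterministic, as your general prescription says, gives $x_k=1\ge x_j$ and can break the sign.

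The genuine gap is the case $y^*_j=0$. You say the outcome $z^*$, obtained from $y^*$ by setting $y_j=1$, ``has one fewer zero and one more one, which ranks it above $y^*$''. By your own ordering, with $\zeros{\cdot}$ compared first, it ranks \emph{below} $y^*$, so nothing forces $S(z^*)=\Sbern(z^*)$. No choice of $D$ avoids this. Any product distribution with $x_j>0$ and $\Pr[y=y^*]>0$ puts positive mass on outcomes that agree with $y^*$ off coordinate $j$ and have $y_j>0$, and all of these have fewer zeros than $y^*$. So the claim that every non-$y^*$ outcome is ranked higher and those terms cancel exactly is false. The missing step is quantitative. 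Take $D_j=\Ber(p)$ and $D_k=y^*_k\cdot\Ber(q)$ for $k\notin Z$. Then $\Pr[z^*]=\frac{p}{1-p}\Pr[y^*]$, and you may bound the $z^*$ term crudely below by $-\max_i x_i\cdot\Pr[z^*]=-\frac{p^2}{1-p}\Pr[y^*]$. The $y^*$ term is at least $\Pr[y^*]\,(p\,\delta_j(y^*)-q)$. With $q=p^2$, the gain $p\,\delta_j(y^*)$ wins for small $p$.

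The second missing idea is the ``main obstacle'' you flag but do not resolve. Outcomes where one $k\notin Z$ drops to $0$ while $y_j=1$ have the same number of zeros as $y^*$. They rank above $y^*$ only because $\numones{y^*}=0$ in this case. The reason: if some $y^*_k=1$, the Bernoulli rounding behind $\Sbern$ never produces the all-zero vector, so $\Sbern(y^*)_j=0$ whenever $y^*_j=0$, contradicting $S(y^*)_j<\Sbern(y^*)_j$. Your polynomial-in-$p$ fallback does not replace this observation. Such swap outcomes have probability of order $pq^{d-1}$ and stakes of order $p$, so their potential loss, of order $p^2q^{d-1}$, exceeds the $y^*$ gain, of order $pq^{d}\delta_j(y^*)$, whenever $q<p\,\delta_j(y^*)$. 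And $q$ must be that small anyway, since the other coordinates can cost up to $q$ at $y^*$. So the lowest-order term does not isolate $y^*$ unless you already know the strategies agree at those outcomes.

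Finally, the symmetrization detour in your first paragraph buys nothing. Value-equivalence does not imply pointwise equality (Appendix~\ref{sec:noeqiv}), and the induction handles binary $y^*$, including $y^*=0$, directly without symmetry.
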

\begin{proof}
Suppose for the sake of contradiction that $S \succeq \Sbern$ but $S \neq \Sbern$.  Let $y^*$ be an outcome such that $S(y^*) \neq \Sbern(y^*)$ and $y^*$ is the highest in the ordering $\gtrdot$.  Inductively,  for any outcome $y \gtrdot y^*$, we know that $S(y) = \Sbern(y)$. 

For ease, let   $\delta(y) := \Sbern(y) - S(y)$.  
Since $S(y), \Sbern(y) \in \Delta_n$ and since $S(y^*) \neq \Sbern(y^*)$, let $j \in [n]$ be an index such that $\delta(y^*)_{j} > 0$.  Let $Z = \{i \mid y^*_i = 0\}$ be the set of zero indices in $y^*$.   We consider two cases depending on whether $y^*_j$ is zero or non-zero.
    
\emph{Case 1:} [$j \notin Z$].
We construct a product distribution $D$, parametrized by a small $\epsilon$, over the outcomes as follows:
\begin{itemize}[nosep]
        \item $D_j$ is a point mass at $y^*_j$; hence $x_j = y^*_j > 0$.
        \item For each $i \in Z$, $D_i$ is a point mass at $0$; hence $x_i = 0,\ \forall i \in Z$
        \item For every other index $k$, $D_k = y^*_k \cdot \Ber(\epsilon)$; hence $x_k = \epsilon \cdot y^*_k, \forall k \in [n] \setminus Z \setminus \{j\}$.
    \end{itemize}
Note that under this distribution $D$, 
the only outcomes are either $y^*$ or $y$ with $\zeros{y} > \zeros{y^*}$.  For the latter outcome, we have $y \gtrdot y^*$ and by the inductive hypothesis, $S(y) = \Sbern(y)$.  Hence, $\val(\Sbern; D) > \val(S; D)$ if and only if $x_j \delta(y^*)_j + \sum_{k \neq j} x_k \delta(y^*)_k \geq 0$ which can be ensured by choosing $\epsilon$ small enough.

\emph{Case 2:} [$j \in Z$].
We construct a product distribution $D$, parametrized by small
$p \in (0, 1)$, over the outcomes as follows:\begin{itemize}[nosep]
    \item $D_j = \Ber(p)$; hence $x_j = p$.
    \item For each $i \in Z \setminus \{j\}$, $D_i$ is a point mass at $0$; hence $x_i = 0,\ \forall i \in Z \setminus \{j\}$.
    \item For every other index $k$, $D_k = y^*_k \cdot \Ber(q)$ for $q=p^2$; hence $x_k = q \cdot y^*_k,\ \forall k \in [n] \setminus Z$.
\end{itemize}
We first argue that $y^*_k < 1, \forall k \in [n] \setminus Z$, so that $\numones{y^*}=0$. Indeed, otherwise we'll have $y^*_k = 1$ and $y^*_j = 0$, which implies that $\Sbern(y^*)_j = 0$ and contradicts the choice of $j$.
Now, consider the outcomes $y$ from the distribution $D$.  First, suppose $y_k = 0$ for any $k \notin Z$. 
 Then since $y_j=0$ for all $k\in Z\setminus \{j\}$, we must have $\zeros{y}\ge \zeros{y^*}$. Moreover, the only way to have $\zeros{y}=\zeros{y^*}$ is for $y_j=1$, in which case $\numones{y}>0=\numones{y^*}$. In either case, we have $y \gtrdot y^*$ and hence by the inductive hypothesis, $S(y) = \Sbern(y)$.
Hence, the only class of outcomes $y$ from $D$ to consider is when $y_k = y^*_k, \forall k \notin Z$.  

There are only two outcomes in this class: either when $y_j = 0$ or $y_j = 1$.   When $y_j = 0$, the outcome is exactly $y^*$; call the outcome $z^*$ when $y_j = 1$.  Let $d = |[n] \setminus Z|$.
\begin{align*}
    & \val(\Sbern; D) - \val(S; D) \\
    &= \Pr[y = y^*] \cdot \left(x_j \delta_j(y^*) + \sum_{k \notin Z} x_k \delta_k(y^*) \right) + \Pr[y = z^*] \cdot \left(x_j \delta_j(z^*) + \sum_{k \notin Z} x_k \delta_k(z^*)\right) \\
    &= (1-p)q^d \cdot \left(p \delta_j(y^*) + \sum_{k \notin Z} q y^*_k \delta_k(y^*) \right) + pq^d \cdot \left(p \delta_j(z^*) + \sum_{k \notin Z} qy^*_k \delta_k(z^*)\right)
    \intertext{We want to show that the above expression is positive for some appropriate value of $p$ and $q$.  We can loosely bound the term in the second parenthesis above by $-p$ as in the worst case $\Sbern$ gets zero value and $S$ gets value $p$.}
    &\geq (1-p)q^d \cdot \left(p \delta_j(y^*) + \sum_{k \notin Z} q y^*_k \delta_k(y^*) \right) - p^2q^d
    \intertext{Similarly $\sum_{k \notin Z} q y^*_k \delta_k(y^*) \geq -q$ since again in the worst case $\Sbern$ gets zero value but $S$ gets the best item in $\bar Z$ which has value at most $q$.}
    &\geq (1-p)q^d \cdot \left(p \delta_j(y^*) -q \right) - p^2q^d
    \enspace = \enspace (1-p)pq^d \delta_j(y^*) - (1-p)q^{d+1}  - p^2q^d \\
    & = (1-p)p^{2d+1} \delta_j(y^*) - (1-p)p^{2d+2} - p^{2d+2}
    \enspace > \enspace 0, 
\end{align*}
since $\delta_j(y^*) > 0$ and 
for $p$ small enough, contradicting the dominance of $S$.
\end{proof}

\section{Arbitrary benchmarks}
\label{sec:peeling}

We now consider the general problem of designing strategies that dominate an arbitrary benchmark strategy rather than just the uniform strategy $\Sunif$. Let $\Sdet = (p_1, \ldots, p_n)$ be an arbitrary benchmark strategy. We design a new strategy $\Speel$ that dominates $\Sdet$ and is also admissible.

\paragraph{Analog of the uniform-on-ones strategy.} Given a benchmark $\Sdet$, a natural way to extend the uniform-on-ones strategy is the following: observe the values $y$, and let $J$ be set of indices with $y_j = 1$. Then we output the vector $p_J / \|p_J\|_1$, where $p_J$ is the vector $(p_1, \dots, p_n)$ restricted to the indices $J$ (by setting the entries in $[n]\setminus J$ to $0$). As before, if $J  = \emptyset$, we output $(p_1, \dots, p_n)$.

Interestingly, this approach fails for non-uniform choices of $\Sdet$. Consider the following example with $n=2$: $\Sdet = (\frac{1}{3}, \frac{2}{3})$. Suppose $(x_1, x_2) = (\frac{1}{2}, \frac{1}{2}+\delta)$, for some $\delta>0$. In this case, the strategy above outputs:
\[  \begin{cases}
    \left(\frac{1}{3}, \frac{2}{3}   \right) \qquad \text{ with prob. } \frac{1}{2} \\
    \left(1,0   \right) \qquad \text{ with prob. } \frac{1}{2} (\frac{1}{2}-\delta) \\
    \left( 0,1  \right) \qquad \text{ with prob. } \frac{1}{2} (\frac{1}{2}+\delta).
\end{cases}  \]
For a small value of $\delta$ (e.g., in $(0, 0.1)$), we can observe that in expectation, this strategy does not dominate $\Sdet$. Essentially, the last two cases create a move towards $(\frac{1}{2}, \frac{1}{2})$, which is worse for the given $(x_1, x_2)$ than the baseline of $(\frac{1}{3}, \frac{2}{3})$.

\subsection{Peeling decomposition}
Without loss of generality, let $p_1 \leq \dots \leq p_n$. Let $w_k = (n-k+1)(p_k - p_{k-1}), \forall 1\leq k \leq n$ (where $p_{0}$ is defined to be zero). We note that $w \in \Delta_n$ and observe that the benchmark strategy $\Sdet$ can be decomposed into a weighted combination of uniform strategies on nested subsets of items. Formally, let $\Sunif^k$ denote the strategy that places equal probability mass on items $k$ or higher, i.e., for any outcome $y$ we have, $\Sunif^k(y)_j = \frac{1}{n-k+1}$ if $j \in \{k,\ldots,n\}$ and zero otherwise. Then, 
\[\Sdet = \sum_{k=1}^n w_k \cdot \Sunif^k.
\]
This decomposition suggests a natural way to construct a dominating strategy since each of the uniform strategies in the decomposition can be replaced with a strategy that dominates it. Let $\Sbern^k$ denote the strategy defined in Section \ref{sec:uniform} that is restricted to items $k$ or higher. Formally we have 
$\Sbern^k(y) = (q_1(y), \ldots, q_n(y))$ where:
\begin{align}
    q_i(y) = 
    \begin{cases}
    0, & \text{if } i < k \\
    \frac{1}{n-k+1} \prod_{j=k}^n (1-y_j) + \sum_{\substack{S \subset \{k,\ldots,n\} \\ i \in S}} \frac{(-1)^{|S|-1}}{|S|} \prod_{j \in S}y_j, & \text{ if } i \geq k.
    \end{cases}
\end{align}
Then, we define the strategy $\Speel$ as:
\[\Speel(y) = \sum_{k=1}^n w_k \cdot \Sbern^k(y).\]

\subsection{Value and admissibility}

The following theorem specifies the gain of $\Speel$ over the benchmark strategy $\Sdet$.

\begin{restatable}{theorem}{lempeelvalue} We have the following two lower bounds:
\begin{itemize}[nosep]
    \item $\val(\Speel; D) \geq \val(\Sdet; D) + \sum_{k=1}^n w_k \cdot \frac{n-k+1}{n-k} \cdot \var(x; k),$
    \item $\val(\Speel; D) \geq \val(\Sdet; D) + \sum_{k=1}^n w_k \cdot \left(1 - \prod_{j=k}^n (1-x_j)\right) \cdot \frac{\var(x; k)}{\mean(x; k)},$
\end{itemize}    
where $\mu(x; k)= \frac{\sum_{j=k}^n x_j}{n-k+1}$ and $\var(x;k) = (\frac{1}{n-k+1} \cdot \sum_{j=k}^n x_j^2) - \mu(x;k)^2$ are defined to be the mean and variance of the components of $x$ with indices $k$ or higher.
\end{restatable}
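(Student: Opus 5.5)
The proof will use the linearity of value in the strategy together with the peeling decomposition. Since $\val(S;D)=\E_y[\langle S(y),x\rangle]$ is linear in $S$, and $\Speel=\sum_k w_k\Sbern^k$ while $\Sdet=\sum_k w_k\Sunif^k$, we get
\[
\val(\Speel;D)-\val(\Sdet;D)=\sum_{k=1}^n w_k\left(\val(\Sbern^k;D)-\val(\Sunif^k;D)\right).
\]
Here $\val(\Sunif^k;D)=\mean(x;k)$, and the weights satisfy $w_k\ge 0$ because $p$ is sorted. It therefore suffices to prove, for each $k$ separately,
\[
\val(\Sbern^k;D)\ \ge\ \mean(x;k)+\max\left\{\tfrac{n-k+1}{n-k}\var(x;k),\ \left(1-\prod_{j=k}^n(1-x_j)\right)\tfrac{\var(x;k)}{\mean(x;k)}\right\}.
\]
Summing these bounds with weights $w_k$ then gives both bullets.

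For a fixed $k$, observe that $\Sbern^k$ is exactly the strategy $\Sbern$ of Section~\ref{sec:uniform} applied to the sub-instance on the $m:=n-k+1$ items $\{k,\dots,n\}$, padded with zeros on the items $1,\dots,k-1$. Its coordinates $q_i$ with $i\ge k$ depend only on $y_k,\dots,y_n$. The $y_j$ are independent and $D$ is a product distribution, so the restricted distribution $D_k\times\cdots\times D_n$ is itself a product distribution on $[0,1]^m$. Moreover, the zero coordinates contribute nothing to $\langle \Sbern^k(y),x\rangle$. Hence $\val(\Sbern^k;D)=\val(\Sbern;D_k\times\cdots\times D_n)$, with $\Sbern$ taken in dimension $m$. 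Applying Theorem~\ref{lem:lb} in dimension $m$ yields both lower bounds with $\mean,\var$ replaced by $\mean(x;k),\var(x;k)$, the coefficient $\frac{m}{m-1}=\frac{n-k+1}{n-k}$, and the product taken over $j=k,\dots,n$. These are exactly the required per-$k$ inequalities.

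The only delicate point is the degenerate index $k=n$ (where $m=1$), along with any $k$ with $\mean(x;k)=0$. When $m=1$, $\var(x;n)=0$, so the term vanishes; we interpret $\frac{1}{0}\cdot 0$ as $0$, and indeed $\Sbern^n=\Sunif^n$ is the point mass on item $n$. When $\mean(x;k)=0$, all $x_j=0$ for $j\ge k$, and the same convention applies. Beyond these conventions I expect no real obstacle: the work is already done by Theorem~\ref{lem:lb}, and the proof consists of the linearity step, the reduction of each $\Sbern^k$ to a lower-dimensional $\Sbern$, and a check that $w_k\ge 0$ so that the inequalities can be summed.
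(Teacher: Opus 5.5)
Your proposal is correct and follows essentially the same route as the paper. That route is: linearity of value in the strategy, the peeling decomposition $\Sdet=\sum_k w_k\Sunif^k$, and Theorem~\ref{lem:lb} applied to each $\Sbern^k$, viewed as $\Sbern$ on the $n-k+1$ items $\{k,\dots,n\}$. You also make explicit several points the paper leaves implicit: that $w_k\ge 0$ (needed to sum the inequalities), and how to handle the degenerate cases $k=n$ and $\mean(x;k)=0$.
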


We also show that $\Speel$ is admissible. The proof again proceeds by induction and relies on carefully constructing product distributions that guarantee that any strategy that dominates $\Speel$ must in fact be identical to $\Speel$.

\begin{restatable}{theorem}{lemmapeelingadmissible}
\label{lem:arbitrary-admissible}
If $S \succeq \Speel$, then $S = \Speel$. Hence $\Speel$ is admissible.
\end{restatable}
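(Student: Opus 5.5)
The plan mirrors the proof for $\Sbern$: a backward induction on outcomes ordered by $\gtrdot$. There is one new feature. $\Speel$ is not symmetric, so the symmetrization step of Lemma~\ref{lem:symm} is unavailable. We must instead pin down $S$ on binary outcomes directly, using the nested structure of the peeling decomposition.

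\textbf{Step 1: binary outcomes.} First compute $\Speel$ on binary $y$. If $J=\{i: y_i=1\}$ is nonempty, then $\Sbern^k(y)$ is uniform on $J\cap\{k,\ldots,n\}$ when that set is nonempty, and uniform on $\{k,\ldots,n\}$ otherwise. Hence $\Speel(y)$ puts all its mass on $J$ exactly when $J\cap\{k,\ldots,n\}\neq\emptyset$ for every $k$ with $w_k>0$.

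I would then show $S(y)=\Speel(y)$ for every binary $y$ by induction on $\gtrdot$, with the all-zeros outcome as the start. For $y=0$, I would use a distribution on which only $0$ and outcomes higher in the order occur with non-negligible weight. The plan is to set $x_j=p$ small for one coordinate $j$ at a time and compare first-order terms in $p$. The value of $S$ at $0$ must equal $p_j$-weighted behavior: a small-$p$ expansion gives $\val(S)-\val(\Speel)=p\,(S(0)_j-\Speel(0)_j)+O(p^2)$ when all other coordinates are zero. Since $\Speel(0)=\Sdet$, dominance forces $S(0)_j\ge p_j$ for every $j$, hence $S(0)=\Sdet$.

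For a general binary $y^*$, the argument is the two-case argument of the $\Sbern$ theorem essentially verbatim:
\begin{itemize}
\item \emph{Case 1} ($j\notin Z$): use point masses together with $\Ber(\epsilon)$ perturbations.
\item \emph{Case 2} ($j\in Z$): make $j$ a rare item, $\Ber(p)$ with $q=p^2$ on the remaining non-zero coordinates.
\end{itemize}
These cases use only the inductive hypothesis on higher outcomes and the fact that $\delta(y^*)_j>0$. Symmetry is never needed for them.

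\textbf{Step 2: non-binary outcomes.} Run the same two-case contradiction argument for arbitrary $y^*\in[0,1]^n$, where $\delta:=\Speel-S$. The one place the original proof used a structural fact about $\Sbern$ was in Case 2. There it argued that $y^*_k<1$ for all $k\notin Z$, because otherwise $\Sbern(y^*)_j=0$ would contradict $\delta_j>0$. The analogous claim here would be that a coordinate $y^*_k=1$ with $y^*_j=0$ forces $\Speel(y^*)_j=0$. That is false when $k<j$ and some $w_{k'}>0$ has $k<k'\le j$, because then $\Sbern^{k'}$ ignores item $k$.

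This is the main obstacle. I would handle it by dropping the claim $\numones{y^*}=0$. Instead I would re-examine the ordering argument: outcomes with $y_k=0$ for some $k\notin Z$ still have strictly more zeros, and the only other outcome, $z^*$, has strictly more ones than $y^*$. So $z^*\gtrdot y^*$ regardless, and the case analysis goes through. The remaining bookkeeping is the estimate
\[
(1-p)p^{2d+1}\delta_j(y^*)-O(p^{2d+2})>0,
\]
which is unchanged, since its bounds only used that values lie in $[0,1]$ and that $x_k\le q$ off $Z$.

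Thus both cases yield a distribution on which $\val(\Speel)>\val(S)$, contradicting $S\succeq\Speel$. We conclude $S=\Speel$, and hence $\Speel$ is admissible.
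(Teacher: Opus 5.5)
There is a genuine gap in Case 2, and it affects both Step 1 and Step 2. Once you drop $\numones{y^*}=0$ and allow $y^*_k=1$ for some $k\notin Z$ (unavoidable for every nonzero binary $y^*$ in Step 1), consider the outcome $y$ with $y_j=1$, $y_k=0$, and $y_i=y^*_i$ elsewhere. It trades the zero at $j$ for a zero at $k$, and the one at $k$ for a one at $j$. So $(\zeros{y},\numones{y})=(\zeros{y^*},\numones{y^*})$: it is tied with $y^*$ under $\gtrdot$, and the inductive hypothesis gives no control of $S(y)$. Your claims that such outcomes ``still have strictly more zeros'' and that the remaining one ``has strictly more ones'' fail exactly here. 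Also, $z^*$ has one fewer zero, so $y^*\gtrdot z^*$, not the reverse. That last point is harmless, because the estimate bounds the $z^*$ term crudely.

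The tied outcome is not harmless. It has probability $p(1-q)q^{d-1}\approx p^{2d-1}$. Nothing prevents $S(y)_j>\Speel(y)_j$; for instance $\Speel(y)_j<1$ whenever $y$ has another one at some $i<j$ and $w_{k'}>0$ for some $k'\le i$. With $x_j=p$, its contribution to $\val(\Speel;D)-\val(S;D)$ can therefore be of order $-p^{2d}$. This swamps the main term $(1-p)p^{2d+1}\delta_j(y^*)$, so the bookkeeping is not ``unchanged.''

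The missing idea is the paper's tie-breaker, together with the half of the structural fact you overlooked. You correctly saw that a one at an index $k<j$ need not force $\Speel(y^*)_j=0$. But a one at an index $k>j$ still does. For $k'\le j$, the rounding inside $\Sbern^{k'}$ always produces a nonempty set of ones (it contains $k$) that excludes $j$. For $k'>j$, item $j$ is excluded outright. Hence $\delta_j(y^*)>0$ with $y^*_j=0$ forces all ones of $y^*$ to lie below $j$.

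The paper then orders outcomes lexicographically by $(\zeros{y},\numones{y},\argmaxnonzero{y})$, where $\argmaxnonzero{y}$ is the largest index carrying a one. The tied outcome above has $\argmaxnonzero{y}=j>\argmaxnonzero{y^*}$, so it is strictly higher and the induction covers it. Apply this refined order to all outcomes at once, which makes your separate binary Step 1 unnecessary. Then your base case via $\Speel(0)=\Sdet$, Case 1, and the $q=p^2$ estimate go through as you describe.
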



\section{Application: Improved online-to-batch conversions}\label{sec:otb}

The classical online-to-batch conversion provides a simple technique for converting regret bounds in online optimization into stochastic optimization guarantees \citep{cesa2004generalization}. We briefly recapitulate the classical argument here: suppose that $f(x,z)\in \R$ is some loss function taking as input a parameter $x$ to be optimized and a data point $z$. Our goal is to minimize the function $F(x)=\E_Z[f(x,Z)]$ where $Z$ follows some unknown distribution. Let $z_1,\dots,z_N$ be an i.i.d.\ data set. Then we run an online learning algorithm to obtain iterates $x_1,\dots,x_N$ such that $x_i$ is independent of $z_{j}$ for $j\ge i$. Then if $\bar x$ is chosen uniformly at random from $x_1,\dots,x_N$, we have for any $x_\star$ (usually $x_\star\in \argmin F)$:
\begin{align*}
    \E[F(\bar x) - F(x_\star)]= \frac{\sum_{i=1}^N \E[f(x_i, z_i) - f(x_\star, z_i)]}{N}.
\end{align*}
The numerator of this quantity is exactly the \emph{regret} that an online learning algorithm keeps under control (typically growing as $\sqrt{N}$). Thus, $\E[F(\bar x)]$ approaches the optimal value. 

However, with our techniques we can do better: suppose that $f(x,z)\in[0,1]$. Now, for $i\le N/2$, Define $X_i=1-F(x_i)$ and $Y_i = 1-f(x_i,z_{i+N/2})$. For all $i\le N/2$, we have $x_i$ and $z_{j+N/2}$ are independent. Therefore, $Y_i$ and $Y_j$ are independent given $(X_1,\dots,X_{N/2})$ for all $i,j\le N/2$. Now, let $w_1,\dots,w_{N/2}$ be the weighting obtained by our algorithm applied to $Y_1,\dots,Y_{N/2}$ Clearly,
\begin{align*}
    \E \left[\sum_{i=1}^{N/2} w_i X_i \right] \ge \E \left[\frac{2}{N} \sum_{i=1}^{N/2} X_i \right].
\end{align*}
Set $\hat x = x_i$ with probability $w_i/2$ if $i\le N/2$ and $2/N$ if $i >N/2$. Then, we will have a value that is never worse than the standard ``random iterate selection'', but may be better in the reasonable case that the $X_i$ values are not all equal to each other. Thus, by Lemma~\ref{lem:lb}, we have (with formal statement in Theorem~\ref{thm:otb}):
\begin{align*}
    \E[F(\hat x) - F(x_\star)]&\le \E[F(\bar x) - F(x_\star)] - \E\left[O\left(\frac{\sum_{i=1}^{N/2}X_i^2 - \left(\sum_{i=1}^{N/2}X_i\right)^2 }{\left(\sum_{i=1}^{N/2} X_i\right)} \right)\right].
\end{align*}
Essentially, we obtain a significant gain if there is a large variance in the population loss values in the first half of the online learning procedure. Intuitively, this is exactly where we should expect the largest change in population loss. However, one seemingly lossy step in the above is the partitioning of data and the use of some points twice, once to generate iterates and once to generate our adaptive weighting. In Section~\ref{seq:sequential} we show that in general, a natural single-pass alternative is impossible. 
\section{Sequentially dependent $x_i's$}\label{seq:sequential}

As discussed, in Section~\ref{sec:otb}, we use many of the data points in our training sample twice: once to generate iterates for the online algorithm, and once to generate our adaptive weighting. One might wonder if this is necessary: can we simply use the single-pass of observations generated during the online optimization procedure to also set the averaging weights? In this section we see that without some further assumptions or relaxation, the deterministic uniform random strategy is already admissible.

To formalize the situation, we consider the case that the $x_i$ are not fixed up-front, but are in fact dependent on the $y_i$ in a sequential manner. That is, the $x_i$ are random variables satisfying $x_i = F_i(y_1,\dots,y_{i-1})$. This models certain sequential problems in which the $x_i$ represent decisions that are themselves generated via some adaptive process. This might be the case if the $x_i$ are iterates of some online optimization procedure, for which $x_{t+1}$ depends on the previously observed loss value $y_t$. In this case, we show that it is impossible to dominate the uniform strategy.

\begin{restatable}{theorem}{sequentialcounterexample}
Let $N=2$ and let $S:[0,1]^N\to \Delta_2$, and suppose $S$ is not equal to $(1/2,1/2)$ for all inputs. Then there is a $x_1\in[0,1]$ and a function $x_2:\{0,1\}\to [0,1]$ such that with random variables $y_1\sim\Ber(x_1)$ and $y_2\sim\Ber(x_2(y_1))$, we have:
\begin{align*}
    \E[\langle S(y_1,y_2), x\rangle] < \E[\langle (1/2,1/2), x\rangle].
\end{align*}
\end{restatable}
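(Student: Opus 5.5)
We reduce to binary inputs, write the payoff gap as a sum of two terms (one per value of $y_1$), switch one term off with a non-adaptive choice of $x_2$, and give the other a negative sign by placing $x_1$ on the appropriate side of $x_2(y_1)$.

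\textbf{Reading of the hypothesis.} Since $y_1,y_2\in\{0,1\}$, only the four values $S(0,0),S(0,1),S(1,0),S(1,1)$ enter the expectation. We therefore read the hypothesis as saying that $S(a,b)\neq(1/2,1/2)$ for some $(a,b)\in\{0,1\}^2$. Without this reading the statement would be false: a strategy deviating from uniform only at non-binary inputs has exactly uniform value.

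\textbf{Parametrisation.} Write $S(a,b)=(1/2+s_{ab},\,1/2-s_{ab})$ with $s_{ab}\in[-1/2,1/2]$. Write $u=x_2(0)$ and $v=x_2(1)$. Conditioning on $y_1$ and using $\langle S(a,b)-(1/2,1/2),x\rangle = s_{ab}(x_1-x_2)$, the quantity $\E[\langle S(y_1,y_2),x\rangle]-\E[\langle(1/2,1/2),x\rangle]$ equals
\[
(1-x_1)\,g(u)\,(x_1-u)\;+\;x_1\,h(v)\,(x_1-v),
\]
where $g(u)=(1-u)s_{00}+u\,s_{01}$ and $h(v)=(1-v)s_{10}+v\,s_{11}$. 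The goal is to choose $x_1,u,v$ so that this expression is strictly negative.

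\textbf{Case $s_{00}\neq0$ or $s_{01}\neq0$.} The function $g$ is affine in $u$ and not identically zero, so it vanishes at most at one point. Hence we can fix some $u\in(0,1)$ with $g(u)\neq0$. Set $v=x_1$, which kills the second term; this is a legitimate non-adaptive choice of $x_2(1)$ once $x_1$ is fixed. Then choose $x_1=u-\eta$ if $g(u)>0$, and $x_1=u+\eta$ if $g(u)<0$, with $\eta>0$ small enough that $x_1\in(0,1)$. The gap becomes $(1-x_1)\,g(u)\,(x_1-u)$. Here $1-x_1>0$ and $g(u)(x_1-u)<0$, so the gap is strictly negative.

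\textbf{Case $s_{10}\neq0$ or $s_{11}\neq0$.} This is symmetric. Pick $v\in(0,1)$ with $h(v)\neq0$ and set $u=x_1$, which kills the first term. Place $x_1$ slightly on the side of $v$ that makes $h(v)(x_1-v)<0$, keeping $x_1\in(0,1)$ so that $x_1>0$. The gap is then $x_1\,h(v)\,(x_1-v)<0$.

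\textbf{Where the difficulty lies.} The only subtle point is the interpretation of the hypothesis; the construction itself is elementary. The one non-obvious idea is that the adversary may choose $x_2(y_1)$ equal to $x_1$ on the branch it wants to neutralise. This is exactly where the sequential dependence of $x_2$ on $y_1$ is used: it lets the adversary act separately on each branch $y_1\in\{0,1\}$, which a fixed product distribution cannot do.
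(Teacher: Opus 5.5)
Your proof is correct and follows the paper's own argument essentially line for line. You use the same branch-wise decomposition $(1-x_1)\,g(u)\,(x_1-u)+x_1\,h(v)\,(x_1-v)$, neutralise one branch by setting $x_2$ on that branch equal to $x_1$, and place $x_1$ on the appropriate side of the other branch's value. You also make two points explicit that the paper leaves implicit: choosing $u$ (or $v$) in $(0,1)$ so the interval for $x_1$ is nonempty, and reading the hypothesis as a deviation from $(1/2,1/2)$ at some binary input.
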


At first glance, this suggests that our adaptive weighting schemes cannot be extended to improve the classical online-to-batch conversion. However, in Section~\ref{sec:otb} we already showed that in fact the stochastic optimization setup has slightly more structure, which does enable us to improve.

\section{Impossibility of Simultaneous Dominance}
\label{sec:impossibility}

In this section, we show that it is not possible to design a strategy that simultaneously dominates more than one benchmark strategy. 

\begin{theorem}
    Let $n = 2$ and let $S_1 = (p, 1-p)$ and $S_2 = (p', 1-p')$ be two distinct benchmark strategies. There exists no strategy $S : [0,1]^2 \rightarrow \Delta_2$ that simultaneously dominates $S_1$ and $S_2$.
\end{theorem}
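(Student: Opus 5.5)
Without loss of generality $p < p'$. The plan has two parts. First I will use point-mass distributions to pin down the sign of $S(y)_1 - p$ and $S(y)_1 - p'$ at a few outcomes. Then I will take one Bernoulli-type distribution whose mean vector is close to $(1/2,1/2)$ and show that dominating $S_1$ and dominating $S_2$ force incompatible conditions on a single average of $S$.

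\emph{Step 1 (point masses).} If $D$ is the point mass at $y$, then $x=y$. Dominance of the benchmark $(r,1-r)$ then reads $S(y)_1 y_1 + S(y)_2 y_2 \ge r y_1 + (1-r) y_2$, which is equivalent to $(S(y)_1 - r)(y_1 - y_2) \ge 0$. Apply this with $r=p$ and $r=p'$.
\begin{itemize}[nosep]
\item At $y=(1,1/2)$: $a := S(1,1/2)_1 \ge \max(p,p') = p'$.
\item At $y=(0,1/2)$: $b := S(0,1/2)_1 \le \min(p,p') = p$.
\end{itemize}
In fact only the relation between $a$ and $b$ matters below, so this step is not even strictly needed. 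What it does show is that $S$ must jump across the interval $[p,p']$ depending on the sign of $y_1-y_2$.

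\emph{Step 2 (a mixed distribution).} Let $D_2$ be the point mass at $1/2$ and $D_1=\Ber(x_1)$. Write $s(x_1) := \E[S(y)_1] = x_1 a + (1-x_1) b$, which is affine in $x_1$. Then
\[
\val(S;D) = \tfrac12 + (x_1 - \tfrac12)\, s(x_1),
\]
and the value of the benchmark $(r,1-r)$ is $\tfrac12 + (x_1-\tfrac12) r$. Hence:
\begin{itemize}[nosep]
\item Dominating $S_2$ for $x_1 > 1/2$ requires $s(x_1) \ge p'$.
\item Dominating $S_1$ for $x_1 < 1/2$ requires $s(x_1) \le p$.
\end{itemize}

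\emph{Step 3 (conclusion).} Since $s$ is affine, hence continuous, letting $x_1 \to 1/2$ from both sides gives $p' \le s(1/2) = (a+b)/2 \le p$. This contradicts $p<p'$, so no such $S$ exists.

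The main subtlety is choosing a distribution in which the two dominance constraints act on the same quantity $s$ with opposite inequality directions. Placing the mean vector at the tie point $x_1 = x_2$, where every benchmark has the same value, is what achieves this. The remaining verification is a one-line expectation computation.
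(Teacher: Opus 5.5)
Your proof is correct and takes essentially the same route as the paper. Both arguments restrict to a family of distributions in which the expected probability of selecting item 1 is continuous in the means, and then let the means approach a tie, where each benchmark's dominance condition flips direction. The differences are cosmetic: you fix $D_2$ as a point mass at $1/2$ so that this probability is affine in $x_1$, and you use each benchmark on one side only, together with $p<p'$. The paper instead takes both coordinates Bernoulli and derives $q(x,x)=p$ and $q(x,x)=p'$; as you note, your Step 1 is not needed.
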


\begin{proof}
    Suppose for contradiction that there exists $S : [0,1]^2 \rightarrow \Delta_2$ that dominates $S_1$ and $S_2$. By definition, we must have $\val(S; D) \geq \val(S_1; D)$, for all product distributions $D = D_1 \times D_2$. We restrict our attention to Bernoulli distributions, $D_i = \Ber(x_i)$ for arbitrary $x_1, x_2 \in [0, 1]$. Under this restriction, the samples $y_1$ and $y_2$ only take values in $\{0, 1\}$. So the strategy $S$ is only evaluated at four points. For $i, j \in \{0, 1\}$, let $a_{i,j} := S(i, j)_1 \in [0, 1]$ be the four constants that fully determine strategy $S$.

    Let $q(x_1, x_2)$ be the expected probability that $S$ chooses item 1. Because the distributions are Bernoulli, this is exactly:
    \[q(x_1, x_2) = a_{0,0} (1-x_1)(1-x_2) + a_{0,1} (1-x_1)(x_2) + a_{1,0}(x_1)(1-x_2) + a_{1,1}(x_1)(x_2).\]

    Crucially, note that $q(x_1, x_2)$ is a polynomial in $(x_1, x_2)$ and is thus continuous with respect to $(x_1, x_2)$. We now compute the value of strategy $S$ and the two benchmark strategies $S_1$ and $S_2$.
    \begin{align*}
        \val(S; D)  &= x_2 + (x_1 - x_2) q(x_1,x_2) \\
        \val(S_1; D) &= x_2 + (x_1 - x_2) p \\
        \val(S_2; D) &= x_2 + (x_1 - x_2) p'.
    \end{align*}
    Since $\val(S; D) \geq \val(S_i; D)$ for all product distributions, we have for all $(x_1, x_2) \in [0, 1]^2$:
    \begin{align*}
        (x_1 - x_2) (q(x_1,x_2) - p) \geq 0, \quad \text{ and } \quad 
        (x_1 - x_2) (q(x_1,x_2) - p') \geq 0.
    \end{align*}
    Since $q(x_1, x_2)$ is continuous, taking the limit $x_1 \rightarrow x_2$ requires that $q_1(x, x) = p$ to satisfy the first inequality, and similarly requires that $q_1(x, x) = p'$ to satisfy the second inequality. But since $p \neq p'$, this is a contradiction.
\end{proof}
\section{Conclusion}\label{sec:conclusion}

The problem we study in this paper is both fundamental and pleasingly simple: given a single observation for each of $n$ values, how well can you pick the largest value? Simply selecting a value at random is minimax optimal, but is intuitively unsatisfying because it does not depend on the data. On the other hand, the ERM rule, which is admissible, may be too brittle. We take a principled approach to resolve this issue: we ask for admissibility subject to the constraint that our strategy must dominate a benchmark.

While we study perhaps the simplest and most basic setting, our work raises natural follow-up questions. For example, what should be done if more than one sample is available? What if the observations are not bounded? What is an admissible strategy for stochastic convex optimization that dominates SGD with iterate averaging? In general, we hope our dual focus on admissibility and dominating a given benchmark may inspire further work in the machine learning theory community.

\bibliography{references}

\appendix

\section{Equivalence is not equality}
\label{sec:noeqiv}

In this section, we study the notions of equivalence for strategies.
\begin{Definition}[Equivalence]
Strategies $S$ and $S'$ are \emph{equivalent}, denoted $S \equiv S'$,  
if $\val(S; D) = \val(S'; D)$ for all product distributions $D$. 
\end{Definition}

In this section we show that $S \equiv S'$ does not imply $S = S'$.  We will construct an explicit example; let $n = 3$.  Let $S$ be the uniform strategy, i.e., $S(y) = (1/3, 1/3, 1/3)$ for all $y \in \{0,1\}^3$ and $D$ be the product of Bernoulli distributions.  Define $S'$ via the following table for any $c \in (0, 1/3]$:

\centerline{
\begin{tabular}{c|rl}
$y$ & $\Pr_{Y \sim \Ber(x)}[Y = y]$ & $S'$ \\
\hline
000 & $(1-x_1)(1-x_2)(1-x_3)$ & $(1/3, 1/3, 1/3)$ \\
001 & $(1-x_1)(1-x_2)x_3$ & $(1/3-c, 1/3+c, 1/3)$ \\
010 & $(1-x_1)x_2(1-x_3)$ & $(1/3+c, 1/3, 1/3-c)$ \\
011 & $(1-x_1)x_2x_3$ & $(1/3, 1/3+c, 1/3-c)$ \\
100 & $x_1(1-x_2)(1-x_3)$ & $(1/3, 1/3-c, 1/3+c)$ \\
101 & $x_1(1-x_2)x_3$ & $(1/3-c, 1/3, 1/3+c)$ \\
110 & $x_1x_2(1-x_3)$ & $(1/3+c, 1/3-c, 1/3)$ \\
111 & $x_1x_2x_3$ & $(1/3, 1/3, 1/3)$ \\
\hline 
\end{tabular}
}
\medskip

Clearly $S$ and $S'$ are different strategies.  Now we only need to check $S \equiv S'$, i.e., we need to check if $\forall x \in (0, 1)^3$, it holds that
\begin{equation} \label{eq:poly_identity}
\sum_{y \in \{0, 1\}^N} \Pr_{Y \sim \Ber(x)}[Y = y] \cdot \langle (S(y) - S'(y)), x \rangle = 0.
\end{equation}
Indeed, since $S'$ and $S$ agree on two of the $y$'s we only focus on the six where they differ and dropping $c$ for simplicity, the left-hand side of \eqref{eq:poly_identity} evaluates to
\begin{align*}
& (1 - x_1)(1-x_2)x_3 \cdot (-x_1 + x_2) + (1 - x_1)x_2(1-x_3) \cdot (x_1-x_3) \\
& \quad + (1-x_1)x_2 x_3 \cdot (x_2 - x_3) + x_1(1-x_2)(1-x_3) \cdot (-x_2+x_3) \\
& \quad + x_1 (1-x_2) x_3 \cdot (-x_1 + x_3) + x_1 x_2 (1-x_3) \cdot (x_1 - x_2) \\
& = (1 - x_1) (x_1 x_2 - x_1 x_3) + x_1 (x_3 - x_2 - x_1 x_3 + x_1 x_2) \\
& = 0.
\end{align*}

\section{Missing proofs}

\subsection{Proofs from Section \ref{sec:uniform}}

\begin{Lemma}
\label{lem:equivalence}
Equations \eqref{eq:direct} and \eqref{eq:indirect} used to define the $\Sbern$ strategy are equivalent.
\end{Lemma}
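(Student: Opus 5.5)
We prove the equivalence coefficientwise: both \eqref{eq:direct} and \eqref{eq:indirect} are multilinear polynomials in $y$, and both contain the same term $\frac{1}{n}\prod_{j=1}^n(1-y_j)$. It therefore suffices to show that, for each fixed $i$,
\[
\sum_{S \ni i} \frac{1}{|S|}\prod_{j\in S} y_j \prod_{k\notin S}(1-y_k) \;=\; \sum_{T \ni i} \frac{(-1)^{|T|-1}}{|T|}\prod_{j\in T} y_j .
\]
We will do this by expanding the left-hand side in the monomial basis $\{\prod_{j\in T} y_j\}$ and comparing coefficients.

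For a fixed $S \ni i$, expanding $\prod_{k\notin S}(1-y_k)=\sum_{U\subseteq [n]\setminus S}(-1)^{|U|}\prod_{k\in U}y_k$ shows that the monomial $\prod_{j\in T}y_j$ arises exactly from the pairs with $U=T\setminus S$, where $i\in S\subseteq T$, with sign $(-1)^{|T|-|S|}$. Writing $t=|T|$ and $s=|S|$, and noting there are $\binom{t-1}{s-1}$ sets $S$ with $i\in S\subseteq T$ and $|S|=s$, the coefficient of $\prod_{j\in T}y_j$ on the left is
\[
c_t=\sum_{s=1}^{t}\binom{t-1}{s-1}\frac{(-1)^{t-s}}{s}.
\]

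The only real computation is to show $c_t = \frac{(-1)^{t-1}}{t}$. We use $\frac{1}{s}\binom{t-1}{s-1}=\frac{1}{t}\binom{t}{s}$, which gives
\[
c_t=\frac{1}{t}\sum_{s=1}^t\binom{t}{s}(-1)^{t-s}=\frac{1}{t}\bigl((1-1)^t-(-1)^t\bigr)=\frac{(-1)^{t-1}}{t},
\]
since $t\ge 1$. This matches the coefficient in \eqref{eq:indirect}.

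Alternatively, one could integrate: $\frac{1}{s}=\int_0^1 u^{s-1}\,du$ turns the left side into $\int_0^1 y_i\prod_{k\neq i}(1-y_k+uy_k)\,du$, and expanding $\prod_{k\neq i}(1-(1-u)y_k)$ recovers the right side directly. Either route is routine, and the binomial identity above is the only step requiring care, namely the index bookkeeping for which $(S,U)$ pairs produce a given monomial.
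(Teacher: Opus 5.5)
Your proof is correct and follows essentially the same route as the paper. Both arguments expand $\prod_{k\notin S}(1-y_k)$ binomially, regroup by the union $T=S\cup U$, count the $\binom{t-1}{s-1}$ choices of $S$, and finish with $\frac{1}{s}\binom{t-1}{s-1}=\frac{1}{t}\binom{t}{s}$ and $(1-1)^t=0$. Your alternative integral route, via $\frac{1}{s}=\int_0^1 u^{s-1}\,du$, is also valid; it is the same device the paper later uses to put $K_{i,j}$ in integral form.
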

{
\begin{proof}
    Since the first term in both the equations is identical, we need to prove the identity
    \[\sum_{S : i \in S} \frac{1}{|S|} \prod_{j \in S} y_j \prod_{k \notin S} (1-y_k) = \sum_{S : i \in S} \frac{(-1)^{|S|-1}}{|S|} \prod_{j \in S} y_j.\]
    First, the term $\prod_{k \notin S}(1-y_k)$ can be expanded using standard binomial expansion:
    \begin{align*}
      \prod_{k \notin S}(1-y_k) &= \sum_{T \subseteq [n] \setminus S} (-1)^{|T|} \prod_{\ell \in T} y_\ell.  
      \intertext{Substituting this into the left-hand side and simplifying yields:}
      \mathrm{LHS} &= \sum_{S : i \in S} \sum_{T \subseteq [n] \setminus S} \frac{(-1)^{|T|}}{|S|} \prod_{\ell \in S \cup T} y_\ell.
      \intertext{Since $S$ and $T$ are disjoint, the union $U = S \cup T$ is a set that contains $i$. We can rewrite the double summation by first summing over the combined set $U$.}
      \mathrm{LHS} &= \sum_{U : i \in U} \sum_{S: i \in S, S \subseteq U} \frac{(-1)^{|U| - |S|}}{|S|} \prod_{\ell \in U} y_\ell.
      \intertext{Let $k = |S|$ and $m = |U|$. Since $i$ must be in $S$, the number of such sets $S$ is exactly $\binom{m-1}{k-1}$. So the inner sum simplifies as:}
      \sum_{S: i \in S, S \subseteq U} \frac{(-1)^{|U| - |S|}}{|S|} &= \sum_{k=1}^m \binom{m-1}{k-1} \frac{(-1)^{m-k}}{k}
      \intertext{Using the binomial identity $m \cdot \binom{m-1}{k-1} = k \cdot \binom{m}{k}$, the expression simplifies to:}
      &= \sum_{k=1}^m \binom{m}{k} \frac{(-1)^{m-k}}{m}
      \intertext{From the binomial theorem, we know that $\sum_{k=0}^m \binom{m}{k}(-1)^{m-k} = (1-1)^m = 0$. Therefore $\sum_{k=1}^m \binom{m}{k}(-1)^{m-k} = 0 - \binom{m}{0} (-1)^{m} = - (-1)^m = (-1)^{m-1}$. Substituting back into the LHS:}
      \mbox{LHS} &= \sum_{U : i \in U} \frac{(-1)^{|U|-1}}{|U|} \prod_{\ell \in U} y_\ell = \mbox{RHS}.
      \qedhere
    \end{align*}
\end{proof}
}

\lemprobtokernel*
\begin{proof}
From the definition of $q_i(\cdot), q_j(\cdot)$, we obtain
    \begin{align*}
        q_i(x) - q_j(x) &= \sum_{S : i \in S} \frac{(-1)^{|S|-1}}{|S|} \prod_{k \in S}x_k - \sum_{S : j \in S} \frac{(-1)^{|S|-1}}{|S|} \prod_{k \in S}x_k
        \intertext{after eliminating sets that contain both $i$ and $j$ since they cancel out,}
        &= \sum_{S : i \in S, j \notin S} \frac{(-1)^{|S|-1}}{|S|} \prod_{k \in S}x_k - \sum_{S : j \in S, i \notin S} \frac{(-1)^{|S|-1}}{|S|} \prod_{k \in S}x_k
        \intertext{splitting the set $S$ in the first term as $S$ = $U \cup \{i\}$ and then factoring out $x_i$  (and doing similarly for the second term),}
        &= x_i \cdot \left(\sum_{U : \{i, j\} \notin U} \frac{(-1)^{|U|}}{|U|+1} \prod_{k \in U} x_k\right) - x_j \cdot \left(\sum_{U : \{i, j\} \notin U} \frac{(-1)^{|U|}}{|U|+1} \prod_{k \in U} x_k\right)\\
        &= (x_i - x_j) \cdot \left(\sum_{U : \{i, j\} \notin U} \frac{(-1)^{|U|}}{|U|+1} \prod_{k \in U} x_k\right ) \\
        &= (x_i - x_j) \cdot K_{i,j}(x),
    \end{align*}
where we define
\[
K_{i,j}(x) = \sum_{U : \{i, j\} \notin U} \frac{(-1)^{|U|}}{|U|+1} \prod_{k \in U} x_k.
\]
To complete the proof, we further simplify it.  Using the identity $\frac{1}{k+1} = \int_{0}^1 t^k dt$ to get rid of the denominator term, we obtain
\begin{align*}
    K_{i,j}(x) &= \sum_{U : \{i, j\} \notin U} (-1)^{|U|} \cdot \int_{t=0}^1 t^{|U|} dt \cdot  \prod_{k \in U} x_k \\
    &= \int_{t=0}^1 \left(\sum_{U : \{i, j\} \notin U} (-1)^{|U|} \cdot t^{|U|} \cdot \prod_{k \in U} x_k \right) dt\\
    &= \int_{t=0}^1 \left(\sum_{U : \{i, j\} \notin U}  \prod_{k \in U} (-tx_k) \right) dt
    \intertext{simplifying the sum over all subsets via binomial expansion: $\prod_{k} (1 + a_k) = \sum_{U} \prod_{k\in U} a_k$.}
    &= \int_{t=0}^1 \left(\prod_{k \notin \{i,j\}}(1-tx_k) \right) dt.
\end{align*}
But for any $t \in [0,1]$, we have $1-t x_k \geq 0$, so the integral is always non-negative as desired and we have the lemma.
\end{proof}

\lemmamono*
\begin{proof}
We show that $\frac{\partial F}{\partial x_k} \le 0$. By symmetry, we only need to check for $x_n$. Let $S_{n-1} = \sum_{k=1}^{n-1} x_k$ and $P_{n-1} = \prod_{k=1}^{n-1} (1-x_k)$. Then
$$F = \frac{1 - P_{n-1} (1-x_n)}{S_{n-1} + x_n}.$$
The partial derivative is:
$$ \frac{\partial F}{\partial x_n} = \frac{P_{n-1} (S_{n-1}+x_n) - (1- P_{n-1} (1-x_n))\cdot 1}{(S_{n-1}+x_n)^2} = \frac{P_{n-1} (S_{n-1}+1) - 1}{(S_{n-1}+x_n)^2}.$$
The sign is determined by the numerator. The lemma is true if $P_{n-1} (S_{n-1} +1) \le 1$, which holds if
$$ \left(1 + \sum_{k=1}^{n-1} x_k\right) \prod_{k=1}^{n-1} (1-x_k) \le 1.$$
This is a known inequality; we provide the proof by induction for completeness.  The base case (i.e., $n = 2$) clearly holds since $(1 + x_1)(1 - x_1) = 1 - x_1^2 \leq 1$.  
Inductively, let $S_k = \sum_{i=1}^k x_i$, $P_k = \prod_{i=1}^k (1-x_i)$, and by the induction hypothesis, $P_k(1+S_k) \le 1$. Then, for $n=k+1$,
\begin{align*}
(1+S_k+x_{k+1})P_k(1-x_{k+1}) &= P_k(1+S_k - S_kx_{k+1} - x_{k+1}^2) \\
&= P_k(1+S_k) - P_k x_{k+1}(S_k + x_{k+1}) \\
&\le 1 - P_k x_{k+1}(S_k + x_{k+1}) \le 1,
\end{align*}
where the last step follows because $P_k, x_{k+1}, S_k$ are all non-negative. 
\end{proof}

\begin{Lemma}\label{lem:lb_compare}
For any $x_1, \ldots, x_n \in [0,1]$, we have $(1 - \prod_{k=1}^n (1 - x_k)) \geq \mu(x)$.
\end{Lemma}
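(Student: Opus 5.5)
The claim is that $1-\prod_{k=1}^n(1-x_k)\ge \mean(x)$. The plan is to show that the left-hand side is at least $\max_k x_k$, and then observe that $\max_k x_k \ge \mean(x)$.

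The first step is the bound $\prod_{k=1}^n (1-x_k) \le 1-x_{k^*}$, where $k^*$ is an index maximizing $x_k$. This holds because every factor $1-x_k$ lies in $[0,1]$, so discarding all factors except the $k^*$-th can only increase the product. Rearranging gives
\[
1-\prod_{k=1}^n(1-x_k) \;\ge\; x_{k^*} \;=\; \max_k x_k.
\]

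The second step is immediate: a maximum is at least an average, so
\[
\max_k x_k \;\ge\; \frac{1}{n}\sum_{k=1}^n x_k \;=\; \mean(x).
\]
Chaining the two inequalities proves the lemma.

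There is essentially no obstacle here. The only point needing care is that each $1-x_k$ is nonnegative and at most $1$, which uses the hypothesis $x_k\in[0,1]$. An alternative route is a union-bound reading: for independent $\Ber(x_k)$ variables, $1-\prod_k(1-x_k)$ is the probability that at least one of them equals $1$. That probability is at least $\max_k x_k$, which again exceeds the average.
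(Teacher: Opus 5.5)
Your proof is correct and is essentially the paper's argument. The paper also uses that every factor lies in $[0,1]$ to get $\prod_k (1-x_k) \le 1-x_i$, but it applies this to every index $i$ and averages, whereas you apply it to the maximizing index and then use $\max_k x_k \ge \mu(x)$.
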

\begin{proof}
Let $z_i = 1 - x_i$ and note that $z_i \in [0, 1]$.  Hence, 
$z_i \geq \prod_{k=1}^n z_k$, for all $i \in [n]$.  Averaging this inequality over all $i \in [n]$, we obtain:
\begin{equation}
\label{eq:sumprod}
\frac{1}{n} \sum_{i=1}^n z_i \geq \frac{1}{n} \sum_{i=1}^n \left( \prod_{k=1}^n z_k \right) = \prod_{k=1}^n z_k.
\end{equation}
Now,
\[
     1 - \prod_{k = 1}^n (1 - x_k) 
     = 1 - \prod_{k=1}^n z_k 
     \geq 1 - \frac{1}{n} \sum_{i=1}^n z_i 
     = \frac{1}{n} \sum_{i=1}^n (1 - z_i)
     = \frac{1}{n} \sum_{i=1}^n x_i
     = \mu(x),
\]
where the inequality follows from rearranging \eqref{eq:sumprod}. 
\end{proof}

\subsection{Proofs from Section \ref{sec:peeling}}

\lempeelvalue*
{
\begin{proof}
Since $\Speel$ is defined to be a linear combination of $n$ different $\Sbern$ strategies, we can again use the multilinearity of $\Sbern$ to easily compute its value.
Lemma \ref{lem:lb} gives us the following two bounds.
\begin{align*}
    \val(\Sbern^k; D) &\geq \mu(x; k) + \frac{n-k+1}{n-k} \cdot \var(x;k) \text{, and }\\
    \val(\Sbern^k; D) &\geq \mu(x; k) + (1 - \prod_{j=k}^n(1-x_j)) \cdot \frac{\var(x;k)}{\mu(x;k)}.
\end{align*}
We can now compute the value of $\Speel$ as follows:
\begin{align*}
    \val(\Speel; D) &= \sum_{k=1}^n w_k \cdot \val(\Sbern^k; D)\\
    \intertext{Using just the first bound gives:}
    &\geq \sum_{k=1}^n w_k \left( \mu(x;k) + \frac{n-k+1}{n-k}\cdot \var(x;k)\right) \\
    &= \sum_{k=1}^n p_k x_k + \sum_{k=1}^n w_k \cdot\frac{n-k+1}{n-k}\cdot \var(x;k) \\
    &= \val(\Sdet; D) + \sum_{k=1}^n w_k \cdot\frac{n-k+1}{n-k}\cdot \var(x;k).
    \intertext{Similarly, using the second bound gives:}
    \val(\Speel; D) &\geq \val(\Sdet; D) + \sum_{k=1}^n w_k (1 - \prod_{j=k}^n(1-x_j)) \frac{\var(x;k)}{\mu(x;k)}.
    \qedhere
\end{align*}
\end{proof}
}

\begin{Lemma}
\label{lem:base_arbitrary}
    Consider the outcome $y = (0,\ldots,0)$.    If $S \succeq \Speel$, then $S(y) = \Speel(y)$.
\end{Lemma}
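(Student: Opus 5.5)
Note first that $\Speel(0)=\sum_k w_k \Sbern^k(0)=\sum_k w_k \Sunif^k = p=\Sdet$, since each $\Sbern^k$ at the all-zeros outcome is uniform on $\{k,\dots,n\}$. So we must show that any $S\succeq\Speel$ satisfies $S(0)=p$. The idea is to use distributions concentrated near the outcome $0$, where the value of any strategy is dominated by its action at $y=0$. This mirrors the continuity argument in the proof of the impossibility of simultaneous dominance (Section~\ref{sec:impossibility}).

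\textbf{Step 1 (restricting to small Bernoulli means).} Take $D_i=\Ber(\epsilon a_i)$ with $a\in[0,1]^n$ arbitrary and $\epsilon\to 0$. Then $\Pr[y=0]=1-O(\epsilon)$ and the means are $x=\epsilon a$. For any strategy $T$,
\[
\val(T;D)=\epsilon\bigl(\langle T(0),a\rangle + O(\epsilon)\bigr),
\]
because the total probability of nonzero outcomes is $O(\epsilon)$ and, on those outcomes, $\langle T(y),x\rangle\le \epsilon$.

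\textbf{Step 2 (extracting the inequality at $0$).} Apply $\val(S;D)\ge \val(\Speel;D)$, divide by $\epsilon$, and let $\epsilon\to0$. This gives
\[
\langle S(0)-p,\,a\rangle \ge 0 \quad \text{for all } a\in[0,1]^n.
\]

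\textbf{Step 3 (concluding $S(0)=p$).} Let $d=S(0)-p$. Both $S(0)$ and $p$ lie in $\Delta_n$, so $\sum_i d_i=0$. If $d\neq 0$, some coordinate satisfies $d_i<0$. Choosing $a=e_i$ gives $\langle d,a\rangle=d_i<0$, a contradiction. Hence $S(0)=p=\Speel(0)$.

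The only step needing care is Step 1. The error term must be uniformly $O(\epsilon^2)$ after pulling out the leading $\epsilon$, and it is: there are finitely many binary outcomes, each nonzero one has probability $O(\epsilon)$, and each contributes at most $\epsilon$. Nothing about symmetry is needed, unlike Lemma~\ref{lem:base}, because the benchmark at $0$ is pinned down directly by this first-order argument.
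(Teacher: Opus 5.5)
Your proof is correct and is essentially the paper's argument. The paper picks $j$ with $S(0)_j < \Speel(0)_j$ and uses $D_j = \Ber(p)$ with point masses at $0$ elsewhere, which is exactly your choice $a = e_j$. It then bounds the value gap directly by $(1-p)p\,\delta(j) - p^2 > 0$ for small $p$, instead of your first-order expansion over general $a$ followed by a limit.
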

\begin{proof}
Let $y = (0,\ldots, 0)$. Suppose for contradiction that $S(y) \neq \Speel(y)$. Then since both $S(y)$ and $\Speel(y)$ are in $\Delta_n$, $\exists j \in [n]$ such that $S(y)_j < \Speel(y)_j$. For convenience, let $\delta(j) = \Speel(y)_j - S(y)_j > 0$. We construct a product distribution $D$, parametrized by a small $\epsilon > 0$, as follows:
\begin{itemize}[nosep]
    \item $D_j =  \Ber(p)$; hence $x_j = p$.
    \item For each $k \neq j$, $D_k$ is a point mass at 0; hence $x_k = 0, \forall k \neq j$.
\end{itemize}
Under this distribution $D$, the only outcomes are $y = (0, \ldots, 0)$ or the outcome $z$ that has $z_j = 1$ and all other $z_k = 0,\ \forall k\neq j$.
\begin{align*}
    \val(\Speel; D) - \val(S;D) &= \Pr[y_j = 0] \cdot \left(x_j \delta(j)\right) + \Pr[y_j = 1] \cdot \left(x_j (\Speel(z)_j - S(z)_j)\right)\\
    &\geq (1-p)p\delta(j) + p (-p) > 0,
\end{align*}
for small enough $p$ since $\delta(j) > 0$, contradicting the dominance of $S$.
\end{proof}

\lemmapeelingadmissible*
\begin{proof}
    Suppose for the sake of contradiction that $S \succeq \Speel$ but $S \neq \Speel$. Let $\zeros{y} = |\{i : y_i = 0\}|$ be the number of zero coordinates in a vector $y$, let $\numones{y} = |\{i : y_i = 1\}|$ be the number of one coordinates in a vector $y$, and let $\argmaxnonzero{y} = \max\{i | y_i = 1\}$ be the index of last one coordinate if any and zero otherwise. Let $y^*$ be an outcome such that $S(y^*) \neq \Speel(y^*)$ that lexicographically maximizes $(\zeros{y}, \numones{y}, \argmaxnonzero{y})$. Thus for any outcome $y$ with $\zeros{y} > \zeros{y^*}$, we can assume that $S(y) = \Speel(y)$. Similarly, for any outcome $y$ with $\zeros{y} = \zeros{y^*}$ and $\numones{y} = \numones{y^*}$ but $\argmaxnonzero{y} > \argmaxnonzero{y^*}$, we have $S(y) = \Speel(y)$.

    Since $S(y), \Speel(y) \in \Delta_n$ and since $S(y^*) \neq \Speel(y^*)$, let $j \in [n]$ be an index such that $S(y^*)_j < \Speel(y^*)_j$. For convenience, let $\delta(y) := \Speel(y) -S(y)$; thus $\delta(y^*)_j > 0$. Let $Z = \{i | y^*_i = 0\}$ be the set of zero indices in $y^*$. We consider two cases depending on whether $y^*_j$ is zero or not.

    \emph{Case 1:} [$j \notin Z$].
    We construct a product distribution $D$, parametrized by small $\epsilon$, over the outcomes as follows:
    \begin{itemize}[nosep]
        \item $D_j$ is a point mass at $y^*_j$; hence $x_j = y^*_j > 0$.
        \item For each $i \in Z$, $D_i$ is a point mass at $0$; hence $x_i = 0,\ \forall i \in Z$
        \item For every other index $k$, $D_k = y^*_k \cdot \Ber(\epsilon)$; hence $x_k = \epsilon \cdot y^*_k, \forall k \in [n] \setminus Z \setminus \{j\}$.
    \end{itemize}
Note that under this distribution $D$, 
the only outcomes are either $y^*$ or $y$ with $\zeros{y} > \zeros{y^*}$ (where we know $S(y) = \Speel(y)$ by maximality).  By choosing $\epsilon$ small enough, we can ensure that item $j$ has the largest mean, i.e., $x_j > x_i,\ \forall i \neq j$.  And for $y^*$, we know that $\Speel$ puts more mass on item $j$, and so $\val(\Speel; D) > \val(S; D)$, contradicting the dominance of $S$. 

    \emph{Case 2:} [$j \in Z$] We construct a product distribution $D$, parametrized by small
$p \in (0, 1)$, over the outcomes as follows:
\begin{itemize}[nosep]
    \item $D_j = \Ber(p)$; hence $x_j = p$.
    \item For each $i \in Z \setminus \{j\}$, $D_i$ is a point mass at $0$; hence $x_i = 0,\ \forall i \in Z \setminus \{j\}$.
    \item For every other index $k$, $D_k = y^*_k \cdot \Ber(q)$ for $q=p^2$; hence $x_k = q \cdot y^*_k,\ \forall k \in [n] \setminus Z$.
\end{itemize}
We first argue that $y*_k < 1, \forall k > j$. Indeed, otherwise by definition we'll have that $\Speel(y^*)_j = 0$, which contradicts the choice of $j$. Now, let us consider the outcomes $y$ in the support of distribution $D$. First, suppose $y_k = 0$ for any $k \notin Z$. Then either (i) we have $\zeros{y} > \zeros{y^*}$ (if $y_j = 0$), or (ii) we have $\zeros{y} \geq \zeros{y^*}$ and $\numones{y} > \numones{y^*}$ (if $y_j = 1$ and $y^*_k < 1$), or (iii) we have $\zeros{y} \geq \zeros{y^*}$ and $\numones{y} = \numones{y^*}$ and $\argmaxnonzero{y} > \argmaxnonzero{y^*}$ (if $y_j = 1$ and $y^*_k = 1$) since $y^*_k = 1 \implies k < j$. In either of the three cases, by the choice of $y^*$, we have $S(y) = \Speel(y)$. Hence, the only class of outcomes $y$ from $D$ to consider is when $y_k = y^*_k, \forall k \notin Z$.

There are only two outcomes in this class: either when $y_j = 0$ or $y_j = 1$.   When $y_j = 0$, the outcome is exactly $y^*$; call the outcome $z^*$ when $y_j = 1$.  Let $d = |[n] \setminus Z|$.
\begin{align*}
    & \val(\Speel; D) - \val(S; D) \\
    &= \Pr[y = y^*] \cdot \left(x_j \delta_j(y^*) + \sum_{k \notin Z} x_k \delta_k(y^*) \right) + \Pr[y = z^*] \cdot \left(x_j \delta_j(z^*) + \sum_{k \notin Z} x_k \delta_k(z^*)\right) \\
    &= (1-p)q^d \cdot \left(p \delta_j(y^*) + \sum_{k \notin Z} q y^*_k \delta_k(y^*) \right) + pq^d \cdot \left(p \delta_j(z^*) + \sum_{k \notin Z} qy^*_k \delta_k(z^*)\right)
    \intertext{We want to show that the above expression is positive for some appropriate value of $p$ and $q$.  We can loosely bound the term in the second parenthesis above by $-p$ as in the worst case $\Speel$ gets zero value and $S$ gets value $p$.}
    &\geq (1-p)q^d \cdot \left(p \delta_j(y^*) + \sum_{k \notin Z} q y^*_k \delta_k(y^*) \right) - p^2q^d
    \intertext{Similarly $\sum_{k \notin Z} q y^*_k \delta_k(y^*) \geq -q$ since again in the worst case $\Speel$ gets zero value but $S$ gets the best item in $\bar Z$ which has value at most $q$.}
    &\geq (1-p)q^d \cdot \left(p \delta_j(y^*) -q \right) - p^2q^d
    \enspace = \enspace (1-p)pq^d \delta_j(y^*) - (1-p)q^{d+1}  - p^2q^d \\
    & = (1-p)p^{2d+1} \delta_j(y^*) - (1-p)p^{2d+2} - p^{2d+2}
    \enspace > \enspace 0, 
\end{align*}
since $\delta_j(y^*) > 0$ and 
for $p$ small enough, contradicting the dominance of $S$.
\end{proof}

\subsection{Proofs from \Cref{sec:otb}}
\label{sec:otb_proof}
\begin{theorem}\label{thm:otb}
Suppose that $f:X\times Z\to [0,1]$ is an arbitrary function. Let $D_Z$ be an arbitrary distribution supported on $Z$ and define $F(x) = \E_{z\sim D_Z}[f(x,z)]$. Suppose that $N$ is even, and let $z_1,\dots,z_N$ be an i.i.d.\ sample from $(D_Z)^N$ and let $x_1,\dots,x_N$ be arbitrary random variables values in $X$ subject to $x_n$ being independent of $z_n,\dots,z_{N}$ (i.e., $x$ is adapted to the filtration associated with $z_i$). For $i\le N/2$, define $y_i = 1-f(x_i, z_{i+N/2})$. Then let $w_1,\dots,w_{N/2}$ be the result of applying our strategy (\ref{sec:uniform}) with uniform benchmark to $y_1,\dots,y_{N/2}$. Define $\hat w_i = w_i/2$ for $i\le N/2$ and $\hat w_i = \frac{1}{N}$ for $i> N/2$. Select $\hat x$ at random from $x_1,\dots,x_N$ with $P(\hat x = x_i)=w_i$. Then we have for any $x_\star\in X$:
\begin{align*}
    \E[F(\hat x) - F(x_\star)]&\le \frac{\sum_{i=1}^N \E[f(x_i, z_i) - f(x_\star, z_i)]}{N} \\
    &\qquad- \E\left[\left(1-\prod_{i=1}^{N/2}F(x_i)\right)\frac{\sum_{i=1}^{N/2}(1-F(x_i))^2 - \left(\sum_{i=1}^{N/2}1-F(x_i)\right)^2 }{2\left(\sum_{i=1}^{N/2} 1-F(x_i)\right)} \right]\\
    &\le   \frac{\sum_{i=1}^N \E[f(x_i, z_i) - f(x_\star, z_i)]}{N}.
\end{align*}
\end{theorem}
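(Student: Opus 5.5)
The plan is to condition on the first half of the trajectory and reduce to the value bound of \Cref{lem:lb}. Let $\mathcal{G}$ be the $\sigma$-algebra generated by $z_1,\dots,z_{N/2-1}$ together with any internal randomness of the online algorithm. Since $x_i$ is independent of $z_i,\dots,z_N$, each $x_1,\dots,x_{N/2}$ is $\mathcal{G}$-measurable. The samples $z_{N/2+1},\dots,z_N$ are independent of $\mathcal{G}$.

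\textbf{Step 1: reduction to the single-sample problem.} Conditioned on $\mathcal{G}$, the variables $y_i=1-f(x_i,z_{i+N/2})$ for $i\le N/2$ are independent, lie in $[0,1]$, and have means $X_i:=1-F(x_i)$. So we are exactly in the setting of \Cref{sec:uniform}, with $n=N/2$ and unknown mean vector $X=(X_1,\dots,X_{N/2})$. The weights $w=\Sbern(y)$ are used only to choose among $x_1,\dots,x_{N/2}$. One subtlety is that $z_{i+N/2}$ also influences $x_j$ for $j>i+N/2$. This does not matter, because we only evaluate $F$ at $x_1,\dots,x_{N/2}$ on the weighted half. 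Applying the second bound of \Cref{lem:lb} conditionally gives
\[
\E\Big[\sum_{i\le N/2} w_i X_i \,\Big|\, \mathcal{G}\Big] \ge \mean(X) + \Big(1-\prod_{i\le N/2}(1-X_i)\Big)\frac{\var(X)}{\mean(X)}.
\]
Here $1-X_i=F(x_i)$, which yields the product $\prod F(x_i)$ in the statement.

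\textbf{Step 2: assemble the mixture.} I read the selection rule as $P(\hat x=x_i)=\hat w_i$; the stated $w_i$ is a typo. Then
\[
\E[F(\hat x)] = \tfrac12\E\Big[\sum_{i\le N/2} w_iF(x_i)\Big] + \tfrac1N\sum_{i>N/2}\E[F(x_i)].
\]
Substituting $F=1-X$ and Step 1 gives
\[
\E[F(\hat x)] \le \tfrac1N\sum_{i=1}^N \E[F(x_i)] - \tfrac12\,\E\Big[\big(1-\textstyle\prod F(x_i)\big)\var(X)/\mean(X)\Big].
\]
Rewrite $\var(X)/\mean(X)$ in terms of sums: with $n=N/2$, it equals $\big(\sum X_i^2-\tfrac1n(\sum X_i)^2\big)/\sum X_i$. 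This should match the displayed expression up to the normalization convention; I expect the statement drops the factor $1/n$ in the squared-sum term, and I will track constants honestly. If $\sum X_i=0$, the correction term is zero and the bound degenerates gracefully.

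\textbf{Step 3: standard online-to-batch.} Because $x_i$ is independent of $z_i$, we have $\E[F(x_i)]=\E[f(x_i,z_i)]$ and $F(x_\star)=\E[f(x_\star,z_i)]$. Hence $\tfrac1N\sum_i\E[F(x_i)]-F(x_\star)$ equals the regret term. The final inequality holds because the subtracted term is nonnegative: $\var\ge0$, $\mean\ge0$, and $1-\prod F(x_i)\ge0$.

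The main obstacle is the careful justification of the conditioning in Step 1. We must verify that, given $\mathcal{G}$, the $y_i$ are genuinely an independent product with means $1-F(x_i)$, and that \Cref{lem:lb} is applied pointwise in $\mathcal{G}$ before taking the outer expectation. The remaining concern is bookkeeping the constant in the variance expression so that it matches the statement.
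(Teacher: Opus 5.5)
Your proposal is correct and follows the paper's route: condition on the first-half iterates so that $y_1,\dots,y_{N/2}$ form an independent product with means $1-F(x_i)$, apply the second bound of \Cref{lem:lb}, and take expectations, with your Steps~2 and~3 (the $\hat w$ mixture and $\E[F(x_i)]=\E[f(x_i,z_i)]$) spelling out what the paper leaves implicit. Your diagnosis of the constant is right and is more than a convention: since $\sum_i a_i^2-(\sum_i a_i)^2\le 0$ for $a_i\ge 0$, the printed correction term has the wrong sign and the final ``$\le$'' fails in general, whereas the numerator $\sum_i X_i^2-\frac{2}{N}(\sum_i X_i)^2$ your argument produces is the intended, provable one (the paper's own proof carries the same typo).
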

{
\begin{proof}
By definition, we have that $y_i$ is independent from $y_j$ given $x_1,\dots,x_{N/2}$ for all $i,j$. Moreover, $E[y_i]=1-F(x_i)$. Therefore by Lemma~\ref{lem:lb}, we have (conditioned on $x_1,\dots,x_{N/2}$)
\begin{align*}
    \sum_{i=1}^{N/2} w_i(1-F(x_i))&\ge \frac{2}{N}\sum_{i=1} 1-F(x_i) \\
    &\qquad + \left(1-\prod_{i=1}^{N/2}F(x_i)\right)\frac{\sum_{i=1}^{N/2}(1-F(x_i))^2 - \left(\sum_{i=1}^{N/2}1-F(x_i)\right)^2 }{\left(\sum_{i=1}^{N/2} 1-F(x_i)\right)}
\end{align*}
Therefore, taking expectation with respect to also $z_1,\dots,z_{N/2}$:
\begin{align*}
    \E\left[\sum_{i=1}^{N/2} w_i F(x_i)\right] &\le \E\left[\frac{2}{N}\sum_{i=1}^N F(x_i)\right] \\
    &\quad - \E\left[\left(1-\prod_{i=1}^{N/2}F(x_i)\right)\frac{\sum_{i=1}^{N/2}(1-F(x_i))^2 - \left(\sum_{i=1}^{N/2}1-F(x_i)\right)^2 }{2\left(\sum_{i=1}^{N/2} 1-F(x_i)\right)} \right].
    \qedhere
\end{align*}
\end{proof}
}
\subsection{Proofs from Section \ref{seq:sequential}}
\sequentialcounterexample*

\begin{proof}
Write $S(i,j) = (w_{ij}, 1-w_{ij})$ and $x_2(i)=x_2^i$ for $i,j\in\{0,1\}$. Then, consider the quantity $\E[\langle S(y_1,y_2), x\rangle] - \E[\langle (1/2,1/2), x\rangle]=\E[\langle S(y_1,y_2)-(1/2,1/2), x\rangle]$. A little calculation shows:
\begin{align*}
& \E[\langle S(y_1,y_2)-(1/2,1/2), x\rangle] \\
& = (x_1-x_2^0)(w_{00}-\tfrac{1}{2})(1-x_1)(1-x_2^0) +(x_1-x_2^0)(w_{01}-\tfrac{1}{2})(1-x_1)x_2^0 \\
& \qquad +(x_1-x_2^1)(w_{10}-\tfrac{1}{2})x_1(1-x_2^1) +(x_1-x_2^1)(w_{11}-\tfrac{1}{2})x_1x_2^1 \\
& = (x_1-x_2^0)(1 - x_1)\left((w_{00}-\tfrac{1}{2})(1 - x_2^0) + (w_{01}-\tfrac{1}{2}) x_2^0 \right) \\
& \qquad + (x_1-x_2^1) x_1 \left( (w_{10}-\tfrac{1}{2}) (1-x_2^1) + (w_{11}-\tfrac{1}{2}) x_2^1 \right) \\
& = (x_1-x_2^0)(1 - x_1) \cdot g(x_2^0) + (x_1-x_2^1) x_1 \cdot h(x_2^1),
\end{align*}
where 
\[
g(z) = (w_{00}-\tfrac{1}{2})(1 - z) + (w_{01}-\tfrac{1}{2}) z \mbox{ and }
h(z) = (w_{10}-\tfrac{1}{2}) (1-z) + (w_{11}-\tfrac{1}{2}) z.
\]
Our goal is to identify a setting for $x_1,x_2^0,x_2^1$ such that the above expression is negative.

By assumption, at least one of the $w_{ij}$ is different from $1/2$. Without loss of generality, let us assume it is either $w_{00}$ or $w_{01}$; thus $g(x_2^0) \not\equiv 0$ (the argument is completely symmetric if instead it is one of the other two).  Now, set $x_2^1=x_1$; thus $(x_1-x_2^1) x_1 \cdot h(x_2^1) = 0$.  Let $x_2^0 = z$ be such that $g(z) \neq 0$.  Then it suffices to exhibit $x_1$ such that $(x_1-z) (1 - x_1) \cdot g(z)$ is negative. 

We do a case analysis on the sign of $g(z)$.  Pick any $x_1$ such that
\[
x_1 \in
\left\{
\begin{array}{cl}
(0, z) & \mbox{ if } g(z) > 0, \\
(z, 1) & \mbox{ if } g(z) < 0.
\end{array}
\right.
\]
With this choice, it is easy to see that 
$(x_1-z) (1 - x_1) \cdot g(z) < 0$. 
\end{proof}
\section{Applications}
\label{sec:applications}

We now describe a few ML applications where adaptive weighted averaging arises naturally. Recall that our results show that our adaptive algorithms provably outperform any (non-adaptive) baseline, in expectation. The bounds show that our method is especially effective in the setting where the underlying means are sufficiently different from one another.

\paragraph{Online-to-batch conversion (Iterate averaging).}
Our original motivation for studying the problem is iterate averaging. In stochastic convex optimization, an online learning algorithm generates a sequence $\theta_1, \dots, \theta_T$ of iterates. The standard approach to convert these online iterates into a final solution is to output the uniform average $\bar{\theta}$. (E.g., see, \citep{bubeck2015convex}). However, later iterates are often superior to earlier ones, suggesting that uniform averaging is suboptimal (in practice; in the worst case, it is well known that uniform averaging obtains the min-max optimal results).

This setting fits into our framework with $n=T$: the $x_i$ values correspond to the \emph{utility} (negative loss) of the iterate $\theta_t$. To obtain an estimate of $x_t$, we can use a small set/mini-batch of examples for each $t$, to obtain $y_t$. Using these values, we can use our results to obtain a new weighted average.

\paragraph{Re-weighting an ensemble of models under distribution shifts.}
Model ensembles are ubiquitous in modern ML applications. To solve a task, we train a collection $M_1, \dots, M_n$ of models, with the intuition that (i) different $M_i$ may perform better on different ``regions'' of the input space, and (ii) for uncertain inputs, a weighted average prediction can provide a more reliable output. For the training distribution $\mathcal{D}$, the training process ensures that the average of the model outputs yields the best prediction for an input $x \sim \mathcal{D}$. 

Now suppose we have a shift in the distribution, obtaining $\mathcal{D'}$ (some regions of the space are now more/less important than before). It is natural to ask if there is a better weighting of the models. Once again, this falls into our framework, where the $x_i$ values correspond to the negated loss of $M_i$ on $\mathcal{D}'$. We can estimate $x_i$ as before, using a small number of samples from $\mathcal{D'}$ to obtain a better method for combining the models.   

\paragraph{Aggregation in federated learning.} 
Our framework can also be applied when aggregating updates that have varying impacts on a model. For example, in the federated learning setting~\citep{pmlr-v54-mcmahan17a}, we have $n$ clients each providing a candidate update to a model. The loss reduction (to a global objective of interest) provided by each update is now the $x_i$. The server can obtain an estimate $y_i$ of this quantity using a minibatch as before; our framework then gives new weights for aggregation that can improve upon the simple average. 
\end{document}